\documentclass[12pt]{article}
\usepackage{amsmath}
\usepackage{graphicx}
\usepackage{enumerate}
\usepackage{natbib}
\usepackage{url} % not crucial - just used below for the URL
\usepackage{graphicx,amsfonts,amsmath,amssymb,bm,natbib,booktabs,colortbl,xcolor,float,comment,subfigure,makecell,dcolumn,multirow}
\usepackage[colorlinks,
            linkcolor=blue,
            anchorcolor=blue,
            citecolor=blue]{hyperref}
\usepackage{enumerate}
\usepackage{algorithm, algorithmicx, algpseudocode}
\floatname{algorithm}{Algorithm} %算法
 %输入
 %输出
\usepackage{enumitem}
\usepackage{setspace}

\newtheorem{theorem}{Theorem}%[theorem]
%\newtheorem{algorithm}{Algorithm}%[theorem]
%[section]%[theorem]
%[section]%[theorem]
\newtheorem{example}{Example}%[section]%[theorem]
%[section]%[theorem]
%[section]
\newtheorem{definition}{Definition}%[section]
\newtheorem{proof}{Proof}%[section]
%[section]
%[section]

\newtheorem{lemma}{Lemma}[section]
%\newtheorem{proof}{Proof}
%%%
\newtheorem{assumption}{Assumption}

\iffalse
\definecolor{blue}{RGB}{0, 144, 178}
\definecolor{red}{RGB}{255,18,0}

\definecolor{PolyRed}{RGB}{160,35,55}
\definecolor{SufeRed}{HTML}{811C21}

\definecolor{UIUCOrange}{HTML}{FF552E}
\definecolor{UIUCBlue}{HTML}{13294B}
\definecolor{yellow}{RGB}{240,228,66}
\definecolor{MSU}{RGB}{24, 69, 59}
\definecolor{Maroon}{HTML}{8B008B}
\fi

\newcommand{\argmin}{\operatorname{argmin}}

\newcommand{\bE}{\mathbb{E}}

\newcommand{\indep}{\perp \!\!\! \perp}

\newcommand{\blind}{0}

% DON'T change margins - should be 1 inch all around.
\addtolength{\oddsidemargin}{-.5in}%
\addtolength{\evensidemargin}{-1in}%
\addtolength{\textwidth}{1in}%
\addtolength{\textheight}{1.7in}%
\addtolength{\topmargin}{-1in}%

\begin{document}

\def\spacingset#1{\renewcommand{\baselinestretch}%
{#1}\small\normalsize} \spacingset{1}

%%%%%%%%%%%%%%%%%%%%%%%%%%%%%%%%%%%%%%%%%%%%%%%%%%%%%%%%%%%%%%%%%%%%%%%%%%%%%%

\if0\blind
{
  \title{\bf  Fair Sufficient Representation Learning}
\author{Xueyu $\rm{Zhou}^{a}$,  Chun Yin $\rm{Ip}^{a}$,
	and
	Jian $\rm{Huang}^{a,b}$\thanks{Corresponding author.}\\
%	\vspace{0.2cm}
	{\small {{$\it^{a}$Department of Data Science and Artificial Intelligence, The Hong Kong Polytechnic University}}}\\
	{\small { {$\it^{b}$Department of Applied Mathematics, The Hong Kong Polytechnic University}}}
}
\date{\footnotesize \today}
  \maketitle} \fi

\if1\blind
{
  \bigskip
  \bigskip
  \bigskip
  \begin{center}
    {\LARGE\bf Fair Sufficient Representation Learning}
\end{center}
  \medskip
} \fi

\bigskip
\begin{abstract}
The main objective of fair statistical modeling and machine learning is to minimize or eliminate biases that may arise from the data or the model itself, ensuring that predictions and decisions are not unjustly influenced by sensitive attributes such as race, gender, age, or other protected characteristics.
In this paper, we introduce a Fair Sufficient Representation Learning (FSRL) method that balances sufficiency and fairness. Sufficiency ensures that the representation should capture all necessary information about the target variables, while fairness requires that the learned representation remains independent of sensitive attributes. FSRL is based on a convex combination of an objective function for learning a sufficient representation and an objective function that ensures fairness. Our approach manages fairness and sufficiency at the representation level, offering a novel perspective on fair representation learning. We implement this method using distance covariance, which is effective for characterizing independence between random variables.
We establish the convergence properties of the learned representations. Experiments conducted on healthcase and text datasets with diverse structures demonstrate that FSRL achieves a superior trade-off between fairness and accuracy compared to existing approaches.

\end{abstract}

\noindent%
{\it Keywords: Balancing sufficiency and fairness, conditional independence, convergence, deep neural networks,  representation learning.}
\vfill

\newpage
\spacingset{1.2} % {1.9} % DON'T change the spacing!
\section{Introduction}
The main goal of fair statistical modeling and machine learning is to reduce or eliminate biases that may arise from the data or the model itself, ensuring that predictions and decisions are not unfairly influenced by sensitive attributes such as race, gender, age, or other protected characteristics.
However, due to data imbalances and social biases, standard statistical and machine learning methods can result in decisions that perpetuate serious social stereotypes related to protected attributes such as gender and race.
%For instance, the Correctional Offender Management Profiling for Alternative Sanctions (COMPAS) %system has been found to exhibit bias against black inmates \citep{angwin2022machine}.
Therefore, it is important to develop methods that accounts for data imbalance and ensuring fair predictions. Among recent advancements in fair machine learning, learning fair representations is a crucial aspect and plays a key role in building fair statistical and machine learning models \citep{edwards2015censoring,louizos2015variational,beutel2017data,guo2022learning,park2022fair,zhao2022inherent}.

Representation learning seeks to find low-dimensional representations that capture the essential information of data. The pioneering work by \cite{li1991sliced} introduced the sufficient dimension reduction (SDR) approach.
Within the SDR framework, \cite{li1991sliced} proposed a semi-parametric method known as sliced inverse regression, which estimates a linear sufficient representation (SIR). SIR, however, imposes linearity and constant covariance assumptions on the predictor distribution.
Several other methods for sufficient dimension reduction have also been developed, including those based on conditional covariance operators \citep{fukumizu2009kernel}, mutual information \citep{suzuki2013sufficient}, distance correlation \citep{vepakomma2018supervised}, and semiparametric modeling \citep{ma2012semi}.
 However, these SDR methods primarily focus on \textit{linear dimension reduction}, which may not adequately capture the complexity of high-dimensional data such as images and natural languages that exhibit highly nonlinear characteristics.
Recently, nonlinear representation learning methods have been proposed in various statistical and machine learning tasks, including supervised learning \citep{huang2024deep,chen2024deep}, self-supervised learning \citep{chen2020simple,he2020momentum}, and transfer learning \citep{neyshabur2020being,kumar2022fine}. However, while these methods focus on finding representations with strong predictive power, they often overlook the issue of fairness.

Fair representation learning aims to develop representations that balance predictive power with fairness. In previous studies, \citet{zemel2013learning} introduced a framework for learning low-dimensional representations for the original data, using demographic parity as the fairness criterion. Building on this approach, several studies \citep{louizos2015variational,liu2022fair,guo2022learning} have attempted to learn fair representations through variational autoencoders \citep[VAE]{kingma2013auto} with various fairness constraints. However, VAE  prioritizes data reconstruction capabilities over fairness and prediction. Additionally, training the decoder requires significant computational resources, especially when handling high-dimensional complex data. \citet{beutel2017data} applied adversarial training to remove information about sensitive features from the representation by maximizing the prediction error of these features. However, maximizing the error does not necessarily equate to eliminating sensitive information and achieving fair outcomes. Therefore, further work is needed to develop methods for ensuring the fairness of learned representations.

In this paper, we propose a Fair Sufficient Representation Learning (FSRL) method. Our goal with FSRL is to balance the predictive power and fairness of the learned representations. Our approach builds on the work of \citet{huang2024deep} and \citet{chen2024deep}, who explored deep learning methods for sufficient supervised representation learning tasks. We use conditional independence to characterize sufficiency \citep{li1991sliced}, encouraging the representation to encode information about target variables. In addition, we apply the concept of statistical independence to measure fairness, focusing on protecting sensitive information. By using a convex combination of the criteria for sufficiency and fairness, we construct an objective function that balances these two aspects. We use deep neural networks to approximate nonlinear representations. We establish the theoretical properties and present a non-asymptotic risk bound under suitable conditions. Furthermore, we conduct numerical studies to evaluate the performance of FSRL based on its predictive capabilities. We also use data from healthcare %, image analysis,
and text analysis to illustrate the applications of FSRL and demonstrate its superior ability to balance sufficiency and fairness.

The rest of the paper is organized as follows.  Section 2 presents the proposed method and describes the characteristics of FSRL in terms of sufficiency and fairness. Section 3 presents the objective function of FSRL and gives the implementation with distance covariance. Section 4 establishes its convergence analysis. Section 5 and Section 6 present the experimental results of the proposed method with adequate simulation studies and real-world datasets. Section 7 reviews some related works. Section 8 contains some discussion for this paper and potential future works.

\section{Method}
Suppose we observe a triplet of random vectors $(X,Y,A)\in \mathbb{R}^p\times \mathbb{R}^q\times \mathbb{R}^m$, where $X$ represents a vector of predictors, $Y$ is the response variable, and $A$ encodes information that should be excluded from the representation, such as private attributes like gender or race. Our objective is to learn a representation function $R: \mathbb{R}^p \mapsto \mathbb{R}^d$ with $1 \le d \le p$  that \textit{balances two different requirements}:
\begin{itemize}
    \item $R(X)$ should be sufficient to encode all the information in $X$ about $Y$.
    \item  $R(X)$ should be agnostic to sensitive information encoded in $A$.
\end{itemize}
These two requirements can be inherently conflicting. A representation that is sufficient in the traditional sense may not be agnostic to $A,$, and a fair representation may not be sufficient. Therefore, it is generally impossible to find a representation that is both entirely fair and sufficient. Instead, the goal is to achieve the desired trade-off between these two requirements.

\subsection{Fairness}
There are several definitions of fairness in the literature on fair machine learning. For example, for a binary factor
$A=0$ or $1,$
\citet{zemel2013learning} considered demographic parity by requiring the predictive value $\hat{Y}$
is independent of $A$, that is, $P\{\hat{Y}=1|A=1\}=P\{\hat{Y}=1|A=0\}.$
Moreover, an alternative fairness criterion, equalized odds \citep{hardt2016equality}, requires that $\hat{Y}$ be conditionally independent of $A$ given the target $Y$, that is $P\{\hat{Y}=1|Y=y,A=1\}=P\{\hat{Y}=1|Y=y,A=0\}, y\in\{0,1\}.$
These fairness definitions are applied in various contexts, such as assessing income disparities between genders via demographic parity and evaluating fairness in college admissions using equalized odds. In this work, we focus primarily on demographic parity and consider fairness specifically in the learned representation.

\begin{definition}\label{fairness}
    The measurable representation function $R:\mathbb{R}^p\rightarrow \mathbb{R}^d$
     is a fair representation with respect to the variable $A$ if $R(X)$ is independent of $A$, that is $R(X)\indep A$.
\end{definition}

This definition of fairness implies that the representation should remain unaffected by the influence of factor $A.$  Consequently, when such a fair representation is employed within a predictive model—be it for classification or regression—the outcomes of the predictions will remain uninfluenced by factor $A$, which satisfies demographic parity. This ensures that the model's decisions are equitable and do not inadvertently favor or disadvantage any group associated with $A.$ By maintaining this independence, the model upholds fairness, promoting trust and reliability in its predictive capabilities.

This concept is related to Invariant Risk Minimization (IRM) \citep{arjovsky2019invariant}, which seeks to enhance the generalization capabilities of machine learning models across diverse environments. The core idea of IRM is to identify a data representation such that the optimal classifier for this representation remains consistent across all training environments. This is accomplished by encouraging the model to capture invariant features that are predictive across various environments, rather than relying on spurious correlations that may not hold outside the training data. \citet{zhu2023invariant} studied invariant representation learning, which focuses on developing a representation $R(X)$ that remains consistent across different domains.
It is important to distinguish between the concept of fairness, as defined above, and the notion of invariance in IRM, as they are fundamentally different. In fair representation learning, the sensitive attribute $A$  is a random variable correlated with
$(X,Y).$. In contrast, IRM and invariant representation learning require the representation to remain unchanged across different domains.

\subsection{Sufficiency}
The concept of a sufficient representation was proposed by \cite{li1991sliced} in the context of
sufficient dimension reduction, where the goal is to learn a linear representation function in semiparametric regression models. In the more general nonparametric setting with nonlinear representation functions, a sufficient representation can be defined as follows.

\begin{definition}
    A measurable function
    $R:\mathbb{R}^p\rightarrow \mathbb{R}^d$  is called a sufficient representation of $X$
    if $X$ and $Y$ are conditional independent given $R(X)$, that is $X\indep Y|R(X)$.
\end{definition}

Sufficiency implies that all the useful information in $X$ about target $Y$ has been encoded into $R(X),$ or equivalently, the conditional distribution of $Y$ given $X$ is the same as that of $Y$ given $R(X),$ i.e., $p(y|x) = p(y|R(x)).$   A sufficient representation always exists since $R(X)=X$ trivially satisfies this requirement. The existing sufficient dimension reduction (SDR) methods focus on linear representation functions \citep{cook2005sufficient,li2007directional,zhu2010dimension}. Recent works have attempted to generalize this approach to nonparametric settings with nonlinear representations \citep{huang2024deep, chen2024deep}.  These works considered deep sufficient representation learning and demonstrated the power of deep representation approaches in analyzing complex high-dimensional data.

\subsection{Balancing sufficiency and fairness}
By leveraging sufficiency and fairness, we can characterize the information preserved in the fair representation and propose our goal for FSRL as follows:
\begin{align} &X\indep Y|R(X)\nonumber\\\
\text{subject to }&R(X)\indep A.\label{goal}
\end{align}
Our aim is to find a sufficient representation that is also fair, meaning it is independent of the sensitive attribute $A.$

However, a fundamental challenge arises from the potential conflict between these two conditions, due to the possible dependence between the target variable $Y$ and the sensitive attribute $A$. To address this conflict, it is essential to assess which information should be retained and consider how to achieve an appropriate balance.

\begin{figure}[H]\label{exam1}
\centering
\includegraphics[width=6.0 in, height=1.6 in]{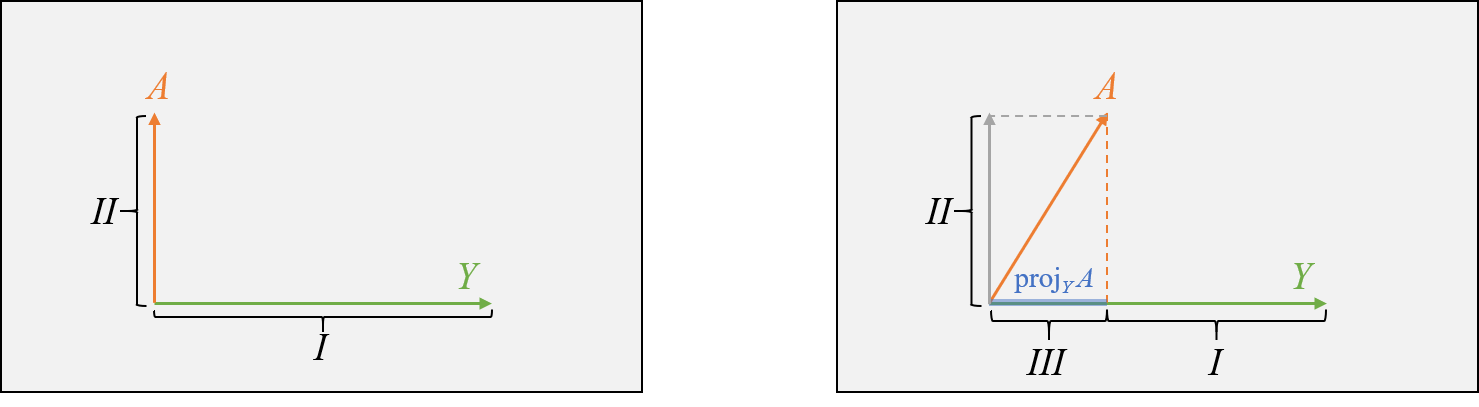}
\caption{Relationship between target variable $Y$ and sensitive attribute $A$:
Left panel:  $Y$ is independent of $A$, indicating that their information is orthogonal; right panel: $Y$ is dependent on $A$, resulting in overlapping information.}
\label{exam1}
\end{figure}

In Figure \ref{exam1}, we depict different relationships between target $Y$ and sensitive attribute $A$. When $Y$ is independent of  $A$ (left panel), the ideal representation $R(X)$ could encode all the information about $Y$ (Part I) while remaining agnostic to sensitive information $A$ (Part II). However, when $Y$ is dependent on $A$ (right panel), it becomes impossible to satisfy both conditions in \eqref{goal}, requiring \textit{a trade-off between sufficiency and fairness}. Sufficiency encourages the representation to preserve information from both Part I and Part III, whereas fairness demands the exclusion of information from Part II and Part III. After maintaining Part I and eliminating Part II, the final balance between predictive power and fairness is represented by Part III.

\section{Nonparametric Estimation of Representation}
The discussion in the previous section provides the foundation for formulating an objective function that enables FSRL to learn a representation and solve problem \eqref{goal}.

\subsection{Population objective function}
Let $\mathbb{V}$ be a measure of dependence between two random variables  $(U,V)$ with the following properties:(a)  $\mathbb{V}[U,V] \ge 0$ with $\mathbb{V}[U, V]=0$ if and only if $U\indep V$; (b) $\mathbb{V}[U, V] \ge \mathbb{V}[R(U), V]$ for every measurable function $R$; (c) $\mathbb{V}[U,V]=\mathbb{V}[R^*(U), V]$ if and only if $U\indep V|R^*(U)$. These properties imply that the objective \eqref{goal} can be achieved by $\max \mathbb{V}[R,Y]$ and $\min \mathbb{V}[R,A]$.

Since sufficiency and fairness remain invariant under one-to-one transformation, directly estimating the representation within a large space is challenging. To address this, we focus on a narrow subspace where the representation follows a standard Gaussian distribution \citep{huang2024deep}. In this subspace, each component of the representation is disentangled, encoding distinct features related to sufficiency and fairness. Our goal is then formulated as a constrained optimization problem:
\begin{align}
    \min_R -&\mathbb{V}[R(X),Y]\nonumber\\
    \text{s.t. }&\mathbb{V}[R(X),A]=0, R(x)\sim N(0,I_d).\label{constrained}
\end{align}

There are several options for $\mathbb{V}$ with the above properties. In this work, we use the distance covariance \citep{szekely2007measuring} as the dependence measure. Let $\mathfrak{i}$ be the imaginary unit $(-1)^{1 / 2}$. For any $s\in \mathbb{R}^{p}$ and $t\in \mathbb{R}^q$, let  $\phi_{U}(s)=\mathbb{E} [\exp^{\mathfrak{i}s^TU}], \phi_{V}(t)=\mathbb{E} [\exp^{\mathfrak{i}t^TV}],$ and $\phi_{U, V}(s,t) = \mathbb{E} [\exp^{\mathfrak{i}(s^T U+ t^T V)}]$ be the characteristic functions of random vectors $U\in \mathbb{R}^p , V \in \mathbb{R}^q,$ and the pair  $(U, V)$, respectively. Let $||\cdot||$ denote the Euclidean norm.
    Given a pair of random variable $(U,V)\in \mathcal{R}^p \times \mathcal{R}^q$ with finite second moments, the distance covariance between $U$ and $V$ is defined as
    \begin{align*}\label{objectivefunction}
        \mathbb{V}[U,V] &= \int_{\mathbb{R}^p\times \mathbb{R}^q} \frac{|\phi_U(t)\phi_V(s)-\phi_{(U,V)}(t,s)|}{c_pc_q||t||^{1+p}||s||^{1+q}}dtds \nonumber \\
        &=\mathbb{E}||U-U^{'}||||V-V^{'}||-2\mathbb{E}||U-U^{'}||||V-V^{''}|| \nonumber\\
        &+\mathbb{E}||U-U^{'}||\mathbb{E}||V-V^{'}||,
    \end{align*}
    where $c_p=\frac{\pi^{(1+p)/2}}{\Gamma((1+p)/2)}$. $(U^{'},V^{'}),(U^{''},V^{''})$ denote the independent copy of $(U,V)$. Distance covariance is an effective measure of both linear and nonlinear relationships between two random variables. Previous studies \citep{liu2022fair,zhen2022versatile,zhu2023invariant,huang2024deep} have demonstrated its utility across various domains.

For the Gaussian regularization, we employ energy distance \citep{szekely2013energy}. Given two random variables $U\in\mathbb{R}^d$ and $V\in\mathbb{R}^d$, the energy distance $\mathbb{D}$ is defined as
\begin{align*}
   \mathbb{D}[U,V] &= \int_{\mathbb{R}^d} \frac{|\phi_U(t)-\phi_V(t)|}{c_d||t||^{1+d}}dt \nonumber \\
    &=2\mathbb{E}||U-V||-\mathbb{E}||U-U'||-\mathbb{E}||V-V'||,
\end{align*}
where $c_d=\frac{\pi^{(1+d)/2}}{\Gamma((1+d)/2)}$ and where $\{U',V'\}$ are independent copies of $\{U,V\}$.
Energy distance satisfies the condition
$\mathbb{D}[U,V]\ge 0 $
and  $\mathbb{D}[U,V]=0$ if and only if
$U$ and $V$ have the same distribution, which can be used to characterize the Gaussian regularization.

To solve the constrained problem \eqref{constrained}, we apply the Lagrangian method and use the equivalent convex combination of $V[R(X),Y]$ and $V[R(X),A]$. Our objective function is
\begin{equation}\label{objective}
    \mathcal{L}(R) =  -\alpha \mathbb{V}[R(X),Y] + (1-\alpha)\mathbb{V}[R(X),A] + \lambda \mathbb{D}[R(X),\gamma_d], 0<\alpha\leq 1,0\leq \lambda,
\end{equation}
where $\gamma_d$ follows a standard Gaussian distribution. In objective function \eqref{objective}, we introduce the parameter $\alpha\in(0,1]$ to control the balance between sufficiency and fairness. As $\alpha$  approaches 1, the representation will retain more private information from part III of Figure \ref{exam1} (right panel),  leading to a more unfair representation. The tuning parameter $\lambda$ controls the Gaussianity regularization.
\begin{theorem}\label{theorem1}
    If $Y\indep A$, for any $\alpha\in(0,1]$ and $\lambda\geq 0$, we have $R_0\in \argmin_{R\sim N(0,I_d)}\mathcal{L}(R)$ provides \eqref{goal} holds.
\end{theorem}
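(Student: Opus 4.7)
The plan is to decompose $\mathcal{L}(R)$ into its three summands and bound each separately using the postulated properties of the dependence measure $\mathbb{V}$ and the energy distance $\mathbb{D}$. Because the minimization is restricted to representations with $R(X)\sim N(0,I_d)$, the regularizer $\lambda\mathbb{D}[R(X),\gamma_d]$ vanishes automatically on the entire feasible set, so the argmin problem reduces to minimizing $-\alpha\mathbb{V}[R(X),Y]+(1-\alpha)\mathbb{V}[R(X),A]$ over that set.

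First, for any measurable $R$, property (b) gives $\mathbb{V}[R(X),Y]\le \mathbb{V}[X,Y]$, hence $-\alpha\mathbb{V}[R(X),Y]\ge -\alpha\mathbb{V}[X,Y]$, with equality iff $X\indep Y\mid R(X)$ by property (c). Second, property (a) gives $(1-\alpha)\mathbb{V}[R(X),A]\ge 0$, with equality iff $R(X)\indep A$. Adding these yields the global lower bound
\[
\mathcal{L}(R)\ \ge\ -\alpha\mathbb{V}[X,Y]
\]
for every feasible $R$. Now take any $R_0$ satisfying \eqref{goal} together with $R_0(X)\sim N(0,I_d)$: sufficiency saturates the first inequality by property (c), fairness saturates the second by property (a), and the Gaussian constraint kills the regularizer. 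Therefore $\mathcal{L}(R_0)=-\alpha\mathbb{V}[X,Y]$ coincides with the lower bound, so $R_0\in\argmin_{R(X)\sim N(0,I_d)}\mathcal{L}(R)$.

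The hypothesis $Y\indep A$ does not appear anywhere in the inequality chain above; its role is to guarantee that the class of representations satisfying \eqref{goal} under the Gaussian marginal constraint is nonempty. As the right panel of Figure \ref{exam1} illustrates, when $Y$ and $A$ are dependent a representation cannot in general be simultaneously sufficient and $A$-independent, so the statement would be vacuous. Under $Y\indep A$, I would start from any sufficient representation and then apply a measure-preserving componentwise transport (for example, a Rosenblatt-type transformation) to standardize its marginal to $N(0,I_d)$; such a one-to-one transformation preserves both the conditional independence $X\indep Y\mid R(X)$ and the marginal independence $R(X)\indep A$.

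The bounds themselves are essentially one-line consequences of properties (a)--(c) of $\mathbb{V}$ and the defining property of $\mathbb{D}$, so I expect the main conceptual step of the proof to be the feasibility argument just sketched: namely, making precise how $Y\indep A$ together with an invertible standardization produces an $R_0$ that simultaneously meets the sufficiency, fairness, and Gaussian-marginal requirements. Everything else reduces to matching three equality conditions term by term.
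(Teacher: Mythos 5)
Your argument is correct and its skeleton is the same as the paper's: restrict to the feasible class $R(X)\sim N(0,I_d)$ so the energy-distance term vanishes, use nonnegativity of $\mathbb{V}[R(X),A]$ with equality at a fair $R_0$, and use the fact that a sufficient $R_0$ maximizes $\mathbb{V}[\cdot\,,Y]$; the hypothesis $Y\indep A$ enters only to make the set of representations satisfying \eqref{goal} nonempty, exactly as in the paper (which simply asserts existence of such an $R_0$ and then shows $\mathcal{L}(R_0)-\mathcal{L}(R)\le 0$ pairwise rather than exhibiting the global lower bound $-\alpha\mathbb{V}[X,Y]$). The one substantive difference is how the sufficiency step is justified: you invoke the postulated properties (b) and (c) of a generic dependence measure to get $\mathbb{V}[R(X),Y]\le\mathbb{V}[X,Y]=\mathbb{V}[R_0(X),Y]$, whereas the paper cites its Lemma \ref{lemma111} (Theorem 1 of \citet{huang2024deep}), which asserts the maximality of a sufficient representation only within the class of $N(0,I_d)$-distributed representations. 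This distinction is not cosmetic for the measure actually used: raw distance covariance is not invariant under arbitrary measurable maps (e.g.\ it scales under $R(u)=cu$), so properties (b)--(c) do not hold unrestrictedly, and the Gaussian-marginal restriction in the cited lemma is what makes the step rigorous. Your version is fine within the paper's own axiomatized framework of Section 3.1 (and even yields a slightly cleaner global bound), but if $\mathbb{V}$ is taken literally as distance covariance you should route the key inequality through the restricted-class lemma as the paper does. Your Rosenblatt-type feasibility sketch goes beyond what the paper proves (it only asserts existence from $Y\indep A$) and is a reasonable, if informal, addition.
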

Theorem \ref{theorem1} shows that if $Y$ and $A$ are independent, we can achieve both both sufficiency and fairness for any values of  $\alpha \in (0, 1]$ and $\lambda \geq 0$. The proof of theorem \ref{theorem1} is provided in Section A of the Supplementary Material.

\subsection{Gaussian linear case}
When the response variable $Y$ is not independent of the sensitive attribute $A$, there is an inherent conflict between sufficiency and fairness. In this case, we present a linear example that demonstrates the mechanism of the proposed FSRL framework.

%\begin{example}\label{LinearExample}
    Suppose that $X\in\mathbb{R}^p$ follows a multivariate Gaussian distribution with mean vector $0$ and covariance matrix $I_p$. The target variable $Y$ is defined as $Y = f(P^TX) + \epsilon_Y$, and the sensitive attribute $A$ is defined as $A = g(Q^TX) + \epsilon_A$, where $\epsilon_Y$ and $\epsilon_A$ are random noises independent of $X$. $P\in\mathbb{R}^{p\times d_Y}$, $Q\in\mathbb{R}^{p\times d_A}$ and $P^TP=I_{d_Y}$, $Q^TQ=I_{d_A}$. We aim to find a linear representation $R^TX$ that is fair to $A$ and maintains enough information about $Y$.
%\end{example}

An ideal solution %to Example \ref{LinearExample}
is given by $R=(I - Q(Q^TQ)^{-1}Q^T)P$, which corresponds to part I in Figure \ref{exam1}. On the one hand, $R^TX$ is fair to $A$ as it removes all sensitive information with $span(R) \perp span(Q)$. On the other hand, $R$ retains all the relevant information about $Y$ after removing the sensitive information.

Under the linear model, the Gaussianity regularization is equivalent to $R^TR=I_{d_0}$, where $d_o$ is the rank of matrix $(I - Q(Q^TQ)^{-1}Q^T)P$. The constrained problem \eqref{constrained} is
\begin{align}
        \min_{R^TR=I_{d_0}} -&\mathbb{V}[R^TX,Y] \nonumber\\
        \text{s.t. } &\mathbb{V}[R^TX,A]=0. \label{linear}
    \end{align}
Based on Propositions 1 and 2 in \citet{sheng2016sufficient}, the solution $\tilde{R}$ of problem \eqref{linear} is capable of accurately identifying the desired subspace and $span(\tilde{R})=span((I - Q(Q^TQ)^{-1}Q^T)P)$. Our approach balances the two competing goals of preserving essential information in the representation and maintaining fairness.

\subsection{Empirical objective function}
In this subsection, we formulate the empirical objective function and estimate the representation through neural networks. First, we introduce the empirical estimation of distance covariance and energy distance. Given random samples samples $\{u_i,v_i\}_{i=1}^n$, the $U$-statistic  \citep{huo2016fast} of distance covariance $\mathbb{V}[U,V]$ is
\begin{align}\label{empdcov}
     \mathbb{V}_n[U,V]=\frac{1}{C_n^4}\sum\limits_{1\leq i_1 < i_2 < i_3 < i_4\leq n}h((u_{i_1},v_{i_1}),\cdots,(u_{i_4},v_{i_4})),
 \end{align}
 where $h$ is the kernel function defined as
 \begin{align*}
     h((u_{i_1},v_{i_1}),\cdots,(u_{i_4},v_{i_4})) &=\frac{1}{4}\sum\limits_{1\leq i,j \leq 4 \atop i\neq j}\vert\vert u_i-u_j\vert\vert\vert\vert v_i-v_j\vert\vert \nonumber\\
     &+\frac{1}{24}\sum\limits_{1\leq i,j \leq 4 \atop i\neq j}\vert\vert u_i-u_j\vert\vert\sum\limits_{1\leq i,j \leq 4 \atop i\neq j}\vert\vert v_i-v_j\vert\vert \nonumber\\
     &-\frac{1}{4}\sum\limits_{i=1}^4(\sum\limits_{1\leq i,j \leq 4 \atop i\neq j}\vert\vert u_i-u_j\vert\vert\sum\limits_{1\leq i,j \leq 4 \atop i\neq j}\vert\vert v_i-v_j\vert\vert).
 \end{align*}
 Moreover, as suggested by \citet{gretton2012kernel}, given random samples $\{u_i\}_{i=1}^n$ and $\{v_i\}_{i=1}^n$, the empirical version of energy distance $\mathbb{D}[U,V]$ is
\begin{align}
    \label{edn}
\mathbb{D}_n(U,V) = \frac{1}{C_n^2} \sum_{1\leq i,j\leq n}g(u_i,u_j;v_i,v_j),
\end{align}
where
$g(u_i,u_j;v_i,v_j)= \|u_i- v_j\| + \|u_j- v_i\| - \|u_i- u_j\| - \|v_i- v_j\|.$

Based on \eqref{empdcov} and \eqref{edn}, we can formulate the empirical version of the objective function \eqref{objective}. Given random samples $\{(X_i,Y_i,A_i)\}_{i=1}^n$ of $(X,Y,A)$ and $\{\gamma_{di}\}_{i=1}^n$ sampled from random variable $\gamma_d \sim N(0,I_d)$, the empirical objective function is
\begin{align}\label{empirical}
    \mathcal{L}_n(R) =  -\alpha \mathbb{V}_n[R(X),Y] + (1-\alpha)\mathbb{V}_n[R(X),A] + \lambda\mathbb{D}_n[R(X),\gamma_d], 0<\alpha\leq 1,0\leq\lambda.
\end{align}

Due to the significant approximation capabilities \citep{shen2020deep,jiao2023deep}, we use feedforward neural networks with Rectified Linear Unit \citep[ReLU]{glorot2010understanding} to estimate the representation $R$.
Let $\mathbf{R}_{\mathcal{D,W,S,B}}$ be the set of such ReLU neural networks $R:\mathbb{R}^p \rightarrow \mathbb{R}^d$ with depth $\mathcal{D}$, width $\mathcal{W}$, size $\mathcal{S}$ and boundary $\vert\vert R \vert\vert_\infty \leq B$. The depth $\mathcal{D}$ refers to the number of hidden layers and the network has $\mathcal{D}+2$ layers in total, including the input layer, hidden layers, and the output layer. $(w_0,w_1,...,w_\mathcal{D},w_{\mathcal{D}+1})$ specifies the width of each layer, where $w_0=p$ is the dimension of input and $w_{\mathcal{D}+1}=d$ is the dimension of the representation. The width $W=\max\{w_1,...,w_\mathcal{D}\}$ is the maximum width of hidden layers and the size $S=\sum_{i=0}^{\mathcal{D}}[w_{i+1}\times(w_i+1)]$ is the total number of parameters in the network. For the set $\mathbf{R}_{\mathcal{D,W,S,B}}$, its parameters satisfy the simple relationship
 \begin{align*}
     \max\{W,D\}\leq S \leq W(p+1)+(W^2+W)(D-1)+(W+1)d=O(W^2D).
 \end{align*}
Based on the empirical objective function and neural network approximation, we have the nonparametric estimation with given $\alpha \in (0,1]$ and $\lambda\geq 0$.
\begin{align*}
    \hat{R}=\argmin_{R \in \mathbf{R}_{\mathcal{D,W,S,B}}} -\alpha \mathbb{V}_n[R(X),Y] + (1-\alpha)\mathbb{V}_n[R(X),A]+\lambda \mathbb{D}_n[R(X),\gamma_d].
\end{align*}
After estimating the representation $\hat{R}$, we can make predictions for the downstream tasks based on the representation. The whole procedure is summarized in Algorithm \ref{algo_FSRL}.
\begin{algorithm}[!ht]
\footnotesize
\spacingset{1.3}
\footnotesize
\spacingset{1.3}
\caption{Sufficient and Fair Representation Learning} %Transfer learning with Data Representation}
  \label{algo_FSRL}
  \textbf{Input: }
  Random samples $\{(x_i,y_i,a_i)\}_{i=1}^n$ and $\{\gamma_{di}\}_{i=1}^n$ sampled from $\gamma_d \sim N(0,I_d)$.

\textbf{Step I: Estimating representation $R$ through FSRL}
  \begin{algorithmic}[1]
    \State
    With the random samples $\{(x_i,y_i,a_i)\}_{i=1}^n$ and $\{\gamma_{di}\}_{i=1}^n$,
    we learn $R$ with the following objectives,
\begin{align*}
  \hat  R = \argmin_{R \in \mathbf{R}_{\mathcal{D,W,S,B}}} \mathcal{L}_n(R) ,
\end{align*}
where
$\mathcal{L}_{n}(R)$ is defined in \eqref{empirical}.
  \end{algorithmic}

  \textbf{Step II: Estimating the prediction function $f_{R}$ for downstream task}
  \begin{algorithmic}[1]
  \State
With random samples $\{(x_i,y_i)\}_{i=1}^n$ and frozen representation $\hat{R}$, we study the the prediction function $f_R$ with the following objectives,
\begin{align*}
    \hat{f}_R = \argmin_{f\in \mathcal{F}} \mathcal{L}_{task,n}(f(\hat{R}(X),Y),
\end{align*}
where
$\mathcal{F}$ is the function class for $f_R$ and $\mathcal{L}_{task,n}$ is the empirical loss function for downstream tasks, such as MSE for regression or cross-entropy loss for classification.

\end{algorithmic}

\textbf{
  Output: The estimated representation $\hat{R}$ and prediction function $\hat f_R$.
}
\end{algorithm}

\section{Convergence Analysis}
In this section, we will build the excess risk bound for the learned representation estimated through deep neural networks under mild conditions.
We begin by defining $\beta$-H{\"o}lder smoothness and outlining some mild assumptions, including the smoothness of representation function, data distributions and neural network parameters.
 \begin{definition}
      A $\beta$-H{\"o}lder smooth class $\mathcal{H}^\beta([0,1]^p,B)$ with $\beta=k+a$, $k\in \mathbb{N}^+,a\in(0,1]$ and a finite constant $B>0$,is a function class consisting of function $f:[0,1]^p\rightarrow R$ satisfying
      \begin{align*}
      \max\limits_{||\alpha||_1\leq k} ||\partial^\alpha f||_\infty \leq B, \max\limits_{||\alpha||_1= k} \max_{x\neq y}\frac{|\partial^\alpha f(x)-\partial^\alpha f(y)|}{||x-y||_2^a} \leq B,
      \end{align*}
    where $||\alpha||_1=\sum_{i=1}^p\alpha_i$ and $\partial^\alpha=\partial^{\alpha_1}\partial^{\alpha_2}\cdots\partial^{\alpha_p}$ for $\alpha=(\alpha_1,...,\alpha_p)$.
 \end{definition}
Denote $R_0(X)=\argmin\mathcal{L}(R)$.
   \begin{assumption}\label{Ass1}
      Each component of $R_0(X)$ is $\beta$-H{\"o}lder smooth on $[0,1]^p$ with constant $\mathcal{B_0}$, that is $R_{0,i}\in \mathcal{H}^\beta([-0,1]^p,B_0), i=1,...,d$.
 \end{assumption}

  \begin{assumption}\label{Ass2}
      The support of $X$ is contained in a compact set $[0,1]^p$. $Y$ and $A$ are bounded almost surely: $||Y||\leq B_1$ , $||A||\leq B_2$, a.s..
 \end{assumption}

 \begin{assumption}\label{Ass3}
     Representation network $\mathbf{R}\equiv \mathbf{R}_{\mathcal{D,W,S,B}}$ parameters: \\
     depth $D= \mathcal{O}(\log n\log_2(\log n))$, width $\mathcal{W} = \mathcal{O}(n^{\frac{p}{2(2\beta+p)}}\log_2(n^{\frac{p}{2(2\beta+p)}}/\log n)/\log n)$, size $\mathcal{S} = \mathcal{O}(dn^{\frac{p}{2\beta+p}}/\log^4(npd))$, boundary $\sup_{X\in[0,1]^p}||R(X)||_2\leq\mathcal{B}$.
 \end{assumption}
The boundary conditions of $Y$ and $A$ in assumption \ref{Ass2} are commonly satisfied in fair machine learning. We emphasize that the network parameters in Assumption \ref{Ass3} are not necessarily unique and other assumptions about smoothness and network parameters can be made to obtain a similar excess risk bound.
 \begin{theorem}[No-asymptotic error bound]\label{Convergence}
      Suppose assumption \ref{Ass1},\ref{Ass2} and \ref{Ass3} hold, then
  \begin{align}\label{bound}
     &\mathbb{E}_{ \{X_i,Y_i,A_i\}_{i=1}^n}\{L(\hat{R})-L(R_0)\} \leq \mathcal{O}(n^{\frac{-\beta}{2\beta+p}}).
 \end{align}
 \end{theorem}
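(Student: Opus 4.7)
The plan is to follow the standard oracle-type decomposition for empirical risk minimization, taking care that $\mathcal{L}_n$ is built from $U$-statistics (distance covariance, energy distance) rather than a plain empirical mean. Let $\tilde{R}\in\mathbf{R}_{\mathcal{D,W,S,B}}$ denote a best neural-network approximant of $R_0$. I would decompose
\begin{align*}
\mathcal{L}(\hat{R})-\mathcal{L}(R_0) &= [\mathcal{L}(\hat{R})-\mathcal{L}_n(\hat{R})] + [\mathcal{L}_n(\hat{R})-\mathcal{L}_n(\tilde{R})] \\
&\quad + [\mathcal{L}_n(\tilde{R})-\mathcal{L}(\tilde{R})] + [\mathcal{L}(\tilde{R})-\mathcal{L}(R_0)],
\end{align*}
where the middle bracket is non-positive by the definition of $\hat{R}$, the last bracket is the deterministic approximation error, and the first and third brackets are both dominated in expectation by the uniform stochastic error $\mathbb{E}\sup_{R\in\mathbf{R}_{\mathcal{D,W,S,B}}}|\mathcal{L}_n(R)-\mathcal{L}(R)|$.

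For the approximation error, I would invoke the ReLU approximation theorems of Shen et al. (2020) and Jiao et al. (2023). By Assumption \ref{Ass1} each component of $R_0$ is $\beta$-H{\"o}lder smooth on $[0,1]^p$, and the depth, width, and size scaling of Assumption \ref{Ass3} yields $\tilde{R}\in\mathbf{R}_{\mathcal{D,W,S,B}}$ with $\|\tilde{R}-R_0\|_\infty = \mathcal{O}(n^{-\beta/(2\beta+p)})$ up to logarithmic factors. A short Lipschitz check, using the triangle inequality together with the compactness of $X$ and the bounds $\|Y\|\le B_1$, $\|A\|\le B_2$ from Assumption \ref{Ass2} and the representation boundary $\mathcal{B}$, shows that each of $\mathbb{V}[R,Y]$, $\mathbb{V}[R,A]$, and $\mathbb{D}[R,\gamma_d]$ is Lipschitz in $R$ with respect to $\|\cdot\|_\infty$; hence $|\mathcal{L}(\tilde{R})-\mathcal{L}(R_0)| = \mathcal{O}(n^{-\beta/(2\beta+p)})$.

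For the stochastic part I would treat the three pieces of $\mathcal{L}_n-\mathcal{L}$ separately. Both $\mathbb{V}_n[R(X),Y]$ and $\mathbb{V}_n[R(X),A]$ are fourth-order $U$-statistics with bounded symmetric kernels, while $\mathbb{D}_n[R(X),\gamma_d]$ is a second-order $U$-statistic, also with a bounded kernel. Combining a Hoeffding decomposition for $U$-processes, Giné–Zinn symmetrization, and a chaining argument against a covering of $\mathbf{R}_{\mathcal{D,W,S,B}}$, together with the pseudo-dimension bound $\mathrm{Pdim}(\mathbf{R}_{\mathcal{D,W,S,B}})=\mathcal{O}(\mathcal{S}\mathcal{D}\log\mathcal{S})$ for ReLU networks, one should obtain
\begin{align*}
\mathbb{E}\sup_{R\in\mathbf{R}_{\mathcal{D,W,S,B}}}|\mathcal{L}_n(R)-\mathcal{L}(R)| \leq \mathcal{O}\!\left(\sqrt{\mathcal{S}\mathcal{D}\log\mathcal{S}/n}\right),
\end{align*}
up to poly-logarithmic factors. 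Plugging in $\mathcal{S}=\mathcal{O}(n^{p/(2\beta+p)})$ and $\mathcal{D}=\mathcal{O}(\log n\log\log n)$ from Assumption \ref{Ass3} gives a stochastic rate of $\mathcal{O}(n^{-\beta/(2\beta+p)})$, matching the approximation rate, and summing the four terms yields \eqref{bound}.

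The main obstacle I anticipate is the uniform control of the fourth-order $U$-process indexed by the network class $\mathbf{R}_{\mathcal{D,W,S,B}}$: the kernel of $\mathbb{V}_n$ couples four sample points through $R(X_{i_1}),\dots,R(X_{i_4})$, so a naive symmetrization does not apply. One has to either invoke Arcones–Gin\'e decoupling to reduce to a sum of independent blocks before chaining, or equivalently bound the Rademacher complexity of the induced kernel class directly using the Lipschitz structure of the distance-covariance kernel in its $R$-arguments. Once this uniform $U$-process bound is in place, the remaining work is the routine balancing of the approximation rate $\mathcal{O}(\mathcal{W}^{-2\beta/p}\mathcal{D}^{-2\beta/p})$ against the stochastic rate $\mathcal{O}(\sqrt{\mathcal{S}\mathcal{D}/n})$, which equilibrates at precisely $n^{-\beta/(2\beta+p)}$ for the scaling chosen in Assumption \ref{Ass3}.
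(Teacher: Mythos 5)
Your proposal is correct and follows essentially the same route as the paper: the identical oracle decomposition into $2\sup_R|\mathcal{L}(R)-\mathcal{L}_n(R)|$ plus the approximation error, the Jiao et al.\ ReLU approximation theorem transferred to $\mathcal{L}$ via a Lipschitz/triangle-inequality argument under the boundedness assumptions, and a decoupling--symmetrization treatment of the distance-covariance and energy-distance $U$-processes (the paper uses Theorem 3.5.3 of de la Pe\~na and Gin\'e, exactly the device you anticipate needing) combined with covering-number/VC-dimension bounds $\mathrm{VC}=\mathcal{O}(\mathcal{DS}\log\mathcal{S})$ and the rate balancing under Assumption 3. The only cosmetic difference is that the paper uses a single-scale $\delta$-net with a Rademacher maximal inequality rather than full chaining, which does not change the rate.
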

Theorem \ref{Convergence} shows that FSRL is consistent with an appropriately selected network structure as n $\rightarrow \infty$.
We omit the coefficients that consist of $p$, $d$, and $\beta$ in \eqref{bound}.
The proof of Theorem \ref{Convergence} and detailed bound are given in Section A of the Supplementary Materials.

\section{Simulation Studies}
In this section, we evaluate the performance of the FSRL method through two simulation studies.  For all experiments, we first use FSRL to learn the representation and then study a prediction function with the frozen representation as input. A total of 10,000 training samples, 1,000 validation samples, and 1,000 testing samples are generated independently in each example. For simplicity, both the representation function and the prediction function are trained on the full training dataset and validated using the validation dataset without separation. The numerical performance of FSRL is subsequently assessed on the testing dataset. Moreover, we construct a deep neural network (\textbf{DNN}) whose structure is a direct splicing of the representation function and prediction model without any fair constraints as the unfair baseline. To measure the fairness of different methods, we focus on demographic parity \cite{zemel2013learning} and implement $\Delta DP$ as the fair criterion, which is defined as
\begin{align*}
    \Delta DP &=\max_{y\in \mathcal{M},i\neq j\in\mathcal{N}} |P\{\hat{Y}=y|A=i\}-P\{\hat{Y}=y|A=j\}|,
\end{align*}
where $\mathcal{M}$ is the collection of all possible values of $Y$ and $\mathcal{N}$ is the collection of $A$.
$\Delta DP$ is influenced by the dependence between the prediction $\hat{Y}$ and sensitive attribute $A$  and a smaller $\Delta DP$ indicates the results are more fair. If $\Delta DP$ is equal to 0, the results satisfy demographic parity and $\hat{Y}$ is independent of $A$. In this section, all the results are the average based on 10 independent replications.

\begin{example}\label{example2}
In this example, we study a special condition in which $Y\indep A$ and we demonstrate that FSRL can achieve both fairness and accuracy under this independence condition.
    We consider the following two cases:
\vspace{-0.5 cm}
 \begin{align*}
 \text{Linear Case}:\   &P(Y=1|X) = \exp\{f(X)\} / \left[1+ \exp\{f(X)\}\right],\\
    &P(A=1|X) = \exp\{g(X)\} / \left[1+ \exp\{g(X)\}\right],
\end{align*}
\text{where } $f(x)=X_1+2X_2-4X_3+1 \text{ and }
g(x) = X_4 + 2X_5 +X_6-1.$

\vspace{-1.4 cm}
\begin{align*}
   \text{Nonlinear Case}:\   &P(Y=1|X) = \exp\{f(X)\} / \left[1+ \exp\{f(X)\}\right],\\
    &P(A=1|X) = \exp\{g(X)\} / \left[1+ \exp\{g(X)\}\right],
 %   \vspace{-1.5 cm}
\end{align*}
%\vspace{-1.5 cm}
\text{where }  $X\sim N(0,I_{50}), f(X)=X_1+X_2+X_2X_3+\sin(2X_3X_4)+1$ \text{ and } $g(X)=(X_5+X_6)^2 + X_8\cos(X_7^2-1)+\frac{\exp(X_7+2X_8-3)}{2}-2.$
\end{example}

We note that $Y$ and $A$ depend on different entries of $X$, indicating their independence. To learn the representation model, we use a linear function for the linear case and a two-layer Multilayer Perceptron (MLP) model with 32 hidden units for the nonlinear case. The dimension of representation is set to 8 for both cases and we use the logistical regression model for prediction. We vary $\alpha$ from 0.1 to 1 in an incremental step of 0.05 and $\lambda$ is set to 0.001 for both cases.

As shown in Figure \ref{simfig1}, FSRL consistently demonstrates comparable performances to DNN in both linear and nonlinear cases. Specifically, since the target variable $Y$ is independent of sensitive attribute $A$, the representation could achieve both sufficiency and fairness. FSRL maintains a small error rate and  $\Delta  DP$ which only fluctuate within a small range and is not sensitive to $\alpha$.

\begin{figure}[!htbp]
\centering
\includegraphics[width=5.8 in, height=2.8 in]{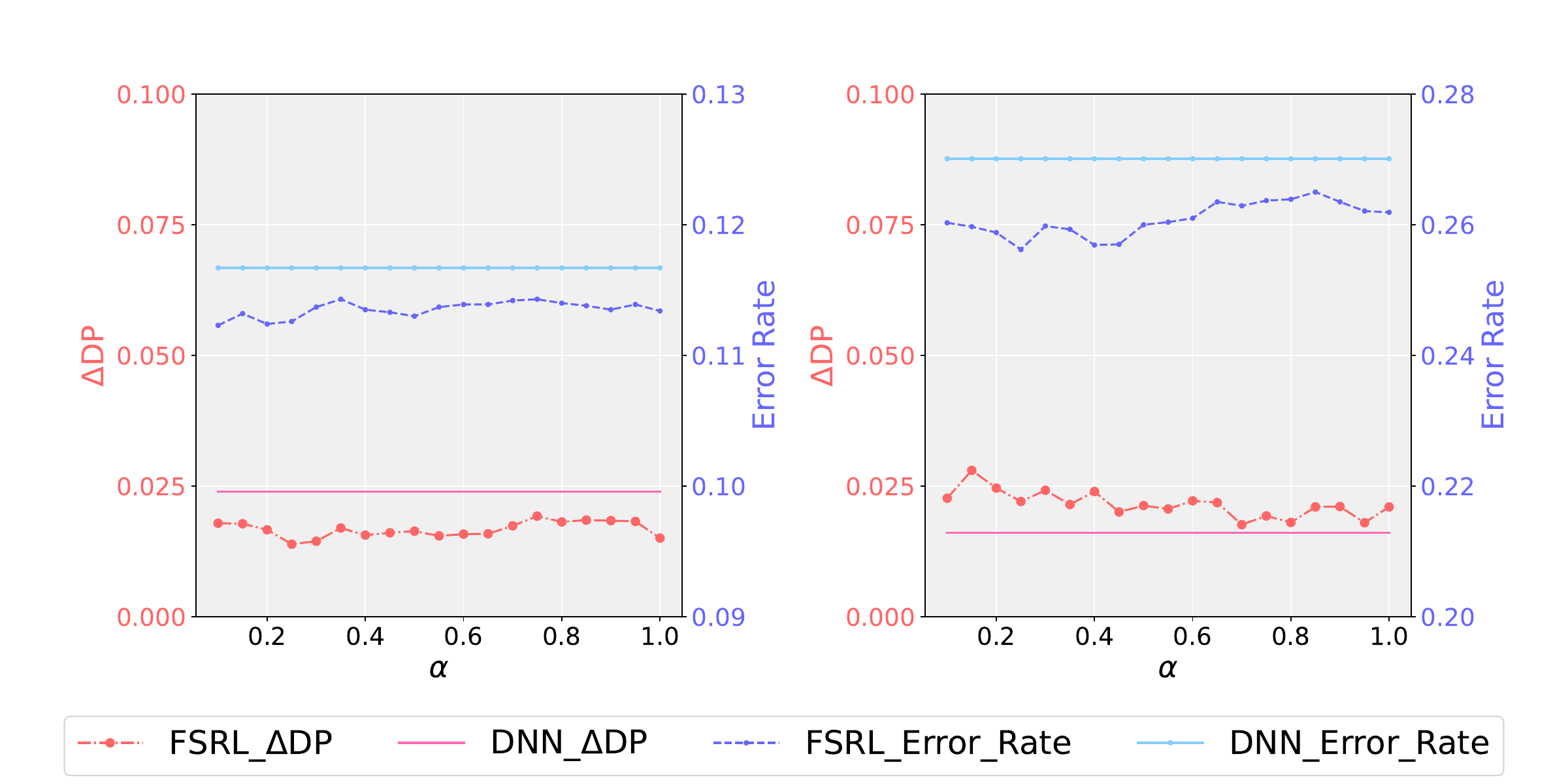}
\caption{Example \ref{example2}: FSRL performs stably when $Y\indep A$.
Left panel: linear case; right panel: nonlinear case.}
\label{simfig1}
\end{figure}

\begin{example}\label{example3}
In this example, we evaluate the performance of FSRL when $Y$ and $A$ are dependent,
which is a  more reasonable scenario in real-world problems.
We demonstrate that FSRL is able to the trade-offs between accuracy and fairness.
We consider the following two cases:

\vspace{-0.5 cm}
\begin{align*}
  \text{Linear Case}:  &P(Y=1|X) = \exp\{f(X)\} / \left[1+ \exp\{f(X)\}\right],\\
    &P(A=1|X) = \exp\{g(X)\} / \left[1+ \exp\{g(X)\}\right],
\end{align*}
 \text{where } $f(X)=X_1+2X_2-4X_3 + X_4 + 2X_5 +1 \text{ and }
g(X) = X_4 + 2X_5 +X_6-1.$

\vspace{-1.4 cm}
\begin{align*}
\text{Nonlinear Case}:
    &P(Y=1|X) = \exp\{f(X)\} / \left[1+ \exp\{f(X)\}\right],\\
    &P(A=1|X) = \exp\{g(X)\} / \left[1+ \exp\{g(X)\}\right],
\end{align*}
 \text{where } $X\sim N(0,I_{50}), f(X)=X_1+X_2+X_2X_3+\sin(2X_3X_4) + (X_5+X_6)^2 + X_8\cos(X_7^2-1)-1 \text{ and } g(X)=(X_5+X_6)^2 + X_8\cos(X_7^2-1)+\frac{\exp(X_7+2X_8-3)}{2}-2.$
\end{example}

In Example \ref{example3}, $f(X)$ and $g(X)$ share some common components and establish inherent connections between $Y$ and $A$. Settings of models and parameters are the same as those in example \ref{example2}.

\begin{figure}[!htbp]
\centering
\includegraphics[width=5.8 in, height=2.8 in]{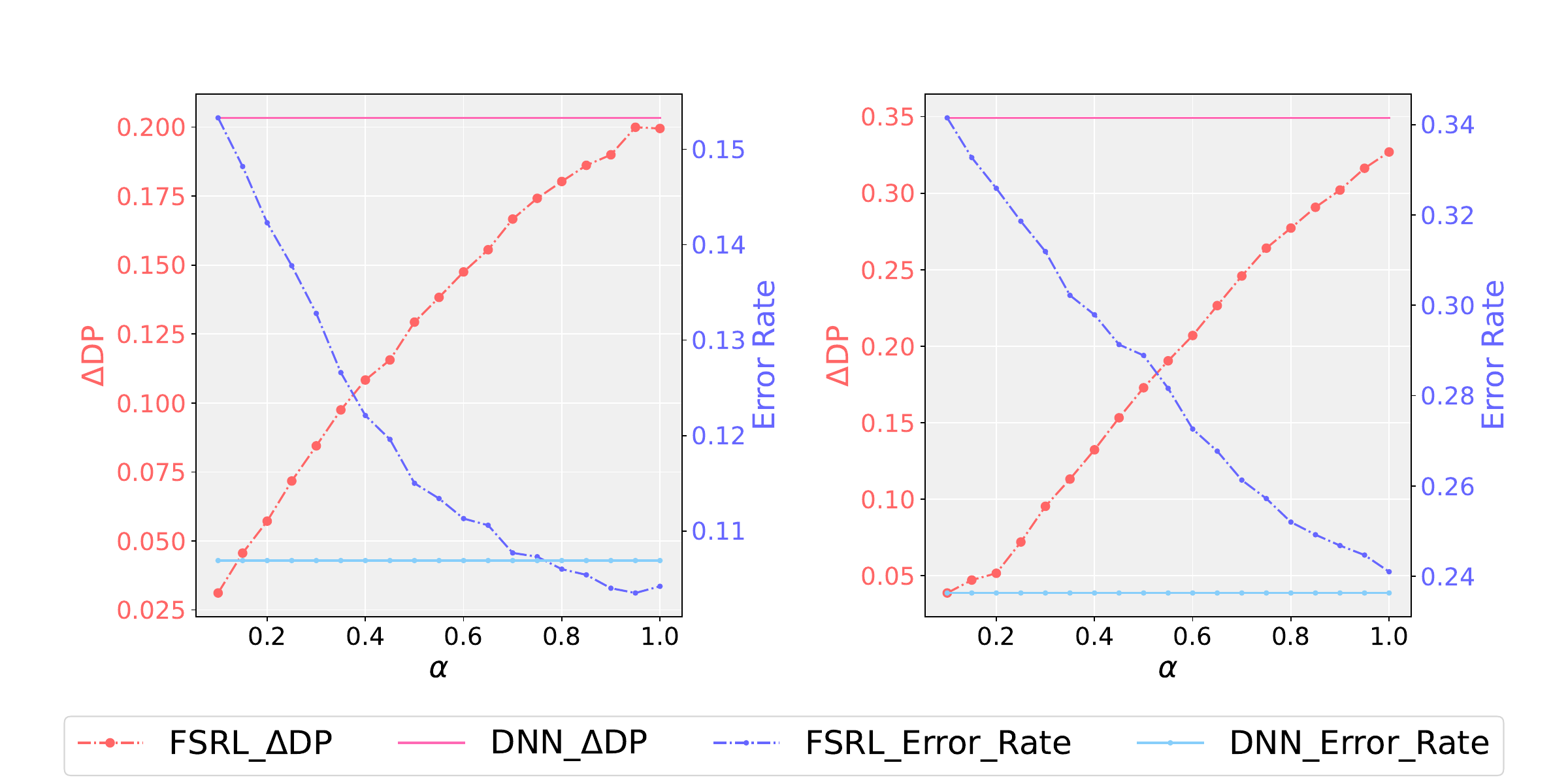}
\caption{Example \ref{example3}:  FSRL achieves different balances between prediction and fairness by varying $\alpha$.Left panel: linear case; right panel: nonlinear case.}
\label{simfig2}
\end{figure}

In Figure \ref{simfig2}, we varied $\alpha$ from 0.1 to 1 in an incremental step of 0.05 and both criteria show obvious responses to changes in $\alpha$. The $\Delta DP$ of FSRL demonstrates a clear growth trend with the increase of $\alpha$ and is almost always smaller than DNN. The error rate is decreasing, indicating an improvement in accuracy. When $\alpha$ is equal to 1, FSRL demonstrates a similar performance as DNN, indicating no sacrifice in prediction. With a suitable $\alpha$, FSRL could achieve the desired balance with a low $\Delta DP$ and error rate.

\begin{figure}[H]
    \centering
    \includegraphics[width=5.8 in, height=2.8 in]{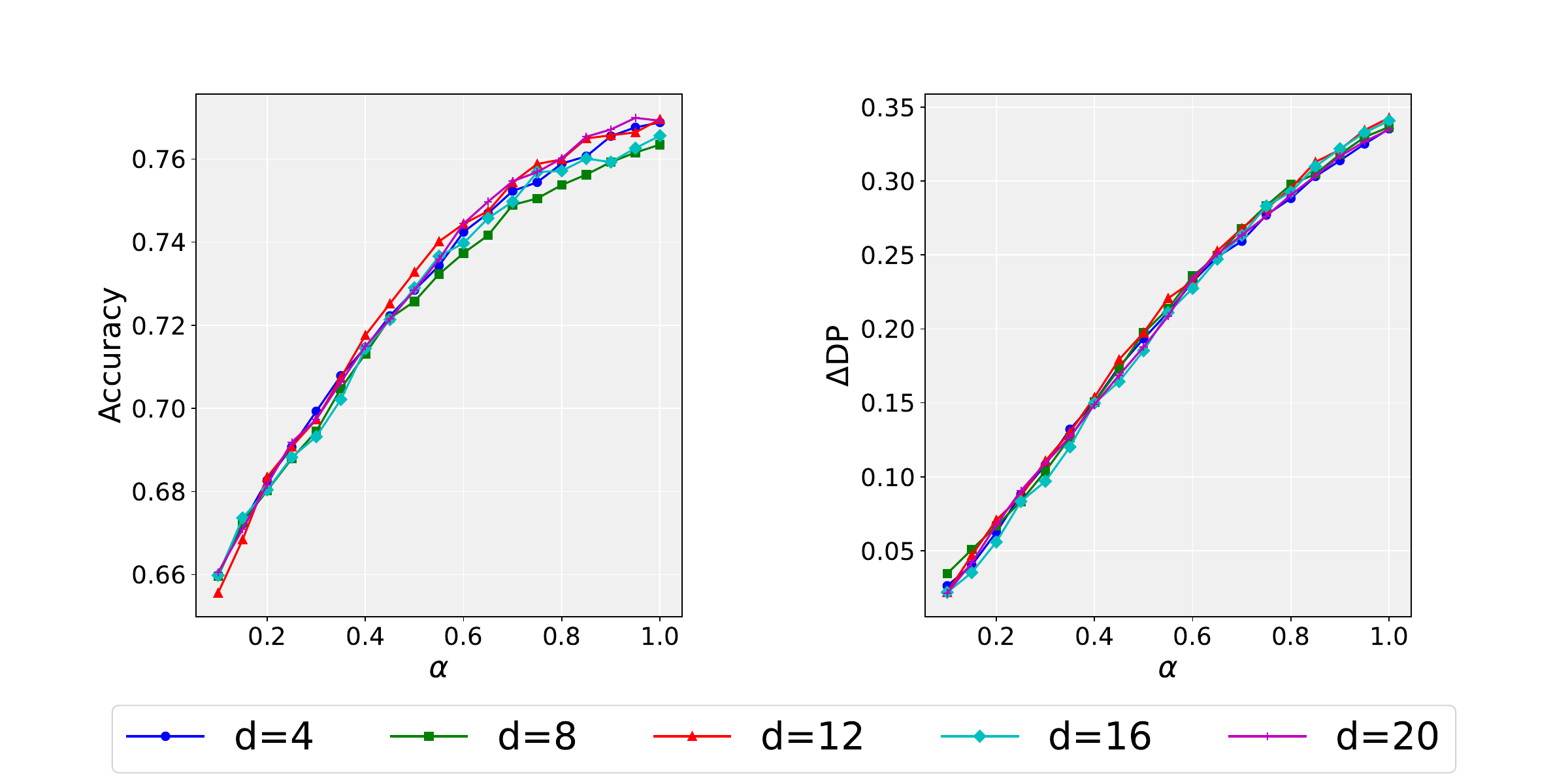}
    \caption{Example \ref{example3}:  performance of FSRL under various dimensions of the  representation. Left panel: accuracy versus $\alpha$ .
Right panel: demographic parity versus $\alpha$.}
    \label{simfig5}
\end{figure}

To further evaluate the effectiveness of FSRL, we analyze its performance across varying dimensions of the representation in the nonlinear case of Example \ref{example3}. As illustrated in Figure \ref{simfig5}, FSRL demonstrates stable performance across different values of $\alpha$ when the representation dimension changes. Notably, even when the dimension is reduced to 4, FSRL effectively identifies essential information and balance accuracy and fairness smoothly through varying $\alpha$.

 \begin{figure}[!htbp]
\centering
\includegraphics[width=5.8 in, height=2.8 in]{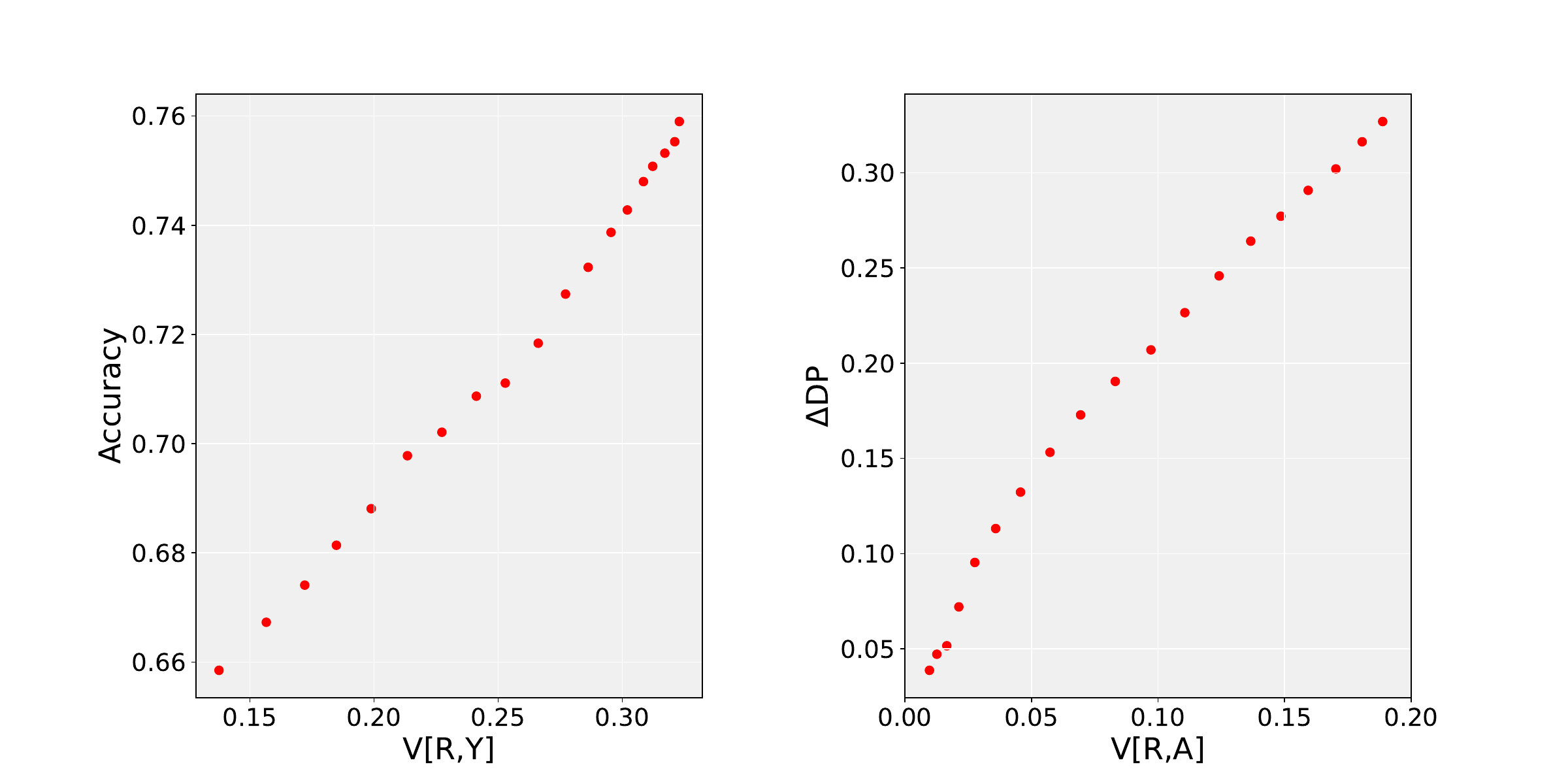}
\caption{Example \ref{example3}: Left panel: relationship between accuracy and distance covariance.
Right panel: relationship between demographic parity and distance covariance.}
\label{simfig3}
\end{figure}

 Figure \ref{simfig3} shows there are strong positive correlations between accuracy and $\mathbb{V}[R,Y]$, as well as between $\Delta DP$ and $\mathbb{V}[R,A]$. The relation between accuracy and $\mathbb{V}[R,Y]$ demonstrates the feasibility of learning a representation with sufficient predictive power by maximizing $\mathbb{V}[R,Y]$. Besides, the relationship between $\Delta DP$ and $\mathbb{V}[R,A]$ suggests that adopting distance covariance as a constraint in fair representation learning is appropriate. Combining the observations in Figure \ref{simfig2} and \ref{simfig3}, FSRL could directly control the balance in representation level, ultimately achieving the trade-offs between accuracy and fairness in final predictions.
For $\alpha \in (0,1]$, FSRL controls and varies the trade-off in a wide range conveniently.

\begin{example}\label{spexample1}
In this example, we illustrate that splitting data to training set for representation learning and a set for prediction function estimation does not significantly affect the experimental results.
We consider the following two cases:
\vspace{-0.5 cm}

    \begin{align*}
    \text{Independent Case:}
    &P(Y=1|X) = \exp\{f(X)\} / \left[1+ \exp\{f(X)\}\right],\\
    &P(A=1|X) = \exp\{g(X)\} / \left[1+ \exp\{g(X)\}\right],
\end{align*}
where $X\sim N(0,I_{50})$, $f(X)=X_1+X_2+X_2X_3+\sin(2X_3X_4)+1$ and $g(X)=(X_5+X_6)^2 + X_8\cos(X_7^2-1)+\frac{\exp(X_7+2X_8-3)}{2}-2$.

\vspace{-1.4 cm}
\begin{align*}
\text{Dependent Case:}
    &P(Y=1|X) = \exp\{f(X)\} / \left[1+ \exp\{f(X)\}\right],\\
    &P(A=1|X) = \exp\{g(X)\} / \left[1+ \exp\{g(X)\}\right],
\end{align*}
where $X\sim N(0,I_{50})$, $f(X)=X_1+X_2+X_2X_3+\sin(2X_3X_4) + (X_5+X_6)^2 + X_8\cos(X_7^2-1)-1$ and $g(X)=(X_5+X_6)^2 + X_8\cos(X_7^2-1)+\frac{\exp(X_7+2X_8-3)}{2}-2$.
\end{example}
In Example \ref{spexample1}, we note that both $Y$ and $A$ have non-linear relationship with covariates $X$. Especially, in the dependent case, $f(X)$ and $g(X)$ share some common components, which builds connections between $Y$ and $A$. We generate a total of 10,000 training samples, 1,000 validation samples, and 1,000 testing samples independently in each case. FSRL use all the training data without separation to estimate both the representation function and prediction function. We randomly split the training samples for learning representation and prediction function with the ratio equal to 4:1, indicating that 8000 samples are used for representation learning and 2000 samples are used for the prediction function. We denote our approach with separation training data as FSRL\_Split.

\begin{figure}[H]
\centering
\includegraphics[width=5.8 in, height=2.8 in]{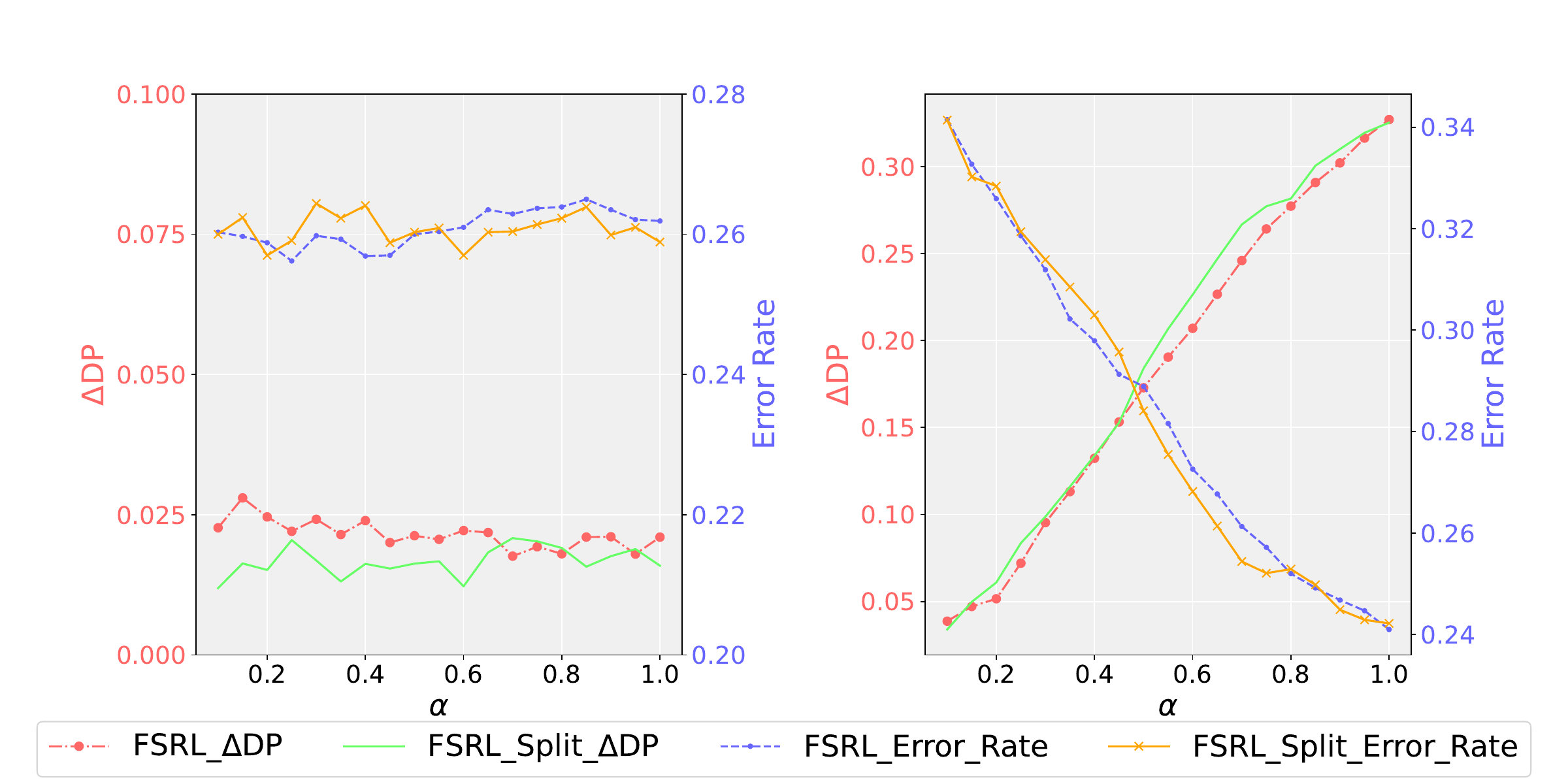}
\caption{Example \ref{spexample1}: FSRL and FSRL\_Split show similar results in both independent and dependent cases. Left panel: independent case; right panel: dependent case.}
\label{spsimfig1}
\end{figure}

As shown in Figure \ref{spsimfig1}, FSRL and FSRL\_Split show comparable performance in both independent and dependent cases. In particular, both methods achieves different balances between $\Delta DP$ and error rate by varying $\alpha$. These observations suggest that whether the data for representation learning and the prediction function are separated has no significant effect on the results.

\section{Real Data Examples} %Case Analysis}
Fair machine learning is crucial for addressing real-world problems. In this section, we demonstrate the broad applicability of FSRL through a series of comprehensive experiments. For all the experiments, we first learn the representation through FSRL and then train the classifier with cross-entropy loss while keeping the representation function frozen.

\subsection{UCI Adult  and Heritage Health datasets}
We apply FSRL to the UCI Adult dataset \citep{misc_adult_2} and the Heritage Health dataset \citep{hhp}. The UCI Adult dataset comprises 48,842 instances with 14 attributes, aiming to predict whether an individual's income exceeds \$50,000, with gender designated as the sensitive variable. The Heritage Health dataset includes information on over 60,000 patients, where the target is a binary label indicating whether the Charlson index is greater than zero. Age is considered the sensitive attribute, and patients are divided into eight groups based on different age ranges. We randomly split these datasets into training, validation, and testing sets with a ratio of 8:1:1.

For representation learning, we use a linear function for the UCI Adult dataset and a two-layer multilayer perceptron (MLP) with 128 hidden units for the Heritage Health dataset. The representation dimension is set to 8 for both datasets, and we apply a logistic regression model as the classifier. For the UCI Adult dataset, the parameter $\alpha$ is varied from 0.1 to 1 in increments of 0.05. For the more compact Heritage Health dataset, $\alpha$ is varied from 0.01 to 0.1 in increments of 0.01, and from 0.1 to 1 in increments of 0.05. The parameter $\lambda$ is set to 0.001 for both datasets. All results are averaged over 10 independent replications. Additional implementation details are provided in Section B of the Supplementary Materials.

For a comprehensive comparison, we consider recent works on fair representation learning: \textbf{LAFTR} \citep{madras2018learning}, \textbf{FCRL} \citep{gupta2021controllable}, and \textbf{Dist-Fair} \citep{guo2022learning}. These methods explored fair representation learning from different perspectives. LAFTR utilized adversarial training to eliminate sensitive information and FCRL made use of contrastive learning methods to estimate the lower bound of mutual information. Both approaches learn the representation first and are evaluated by studying another prediction function with representation. Dist-Fair used the VAE structure and encoded three characters, including the reconstruction ability, prediction power and fairness, within a single objective function. Results of all these methods under differing levels of fairness constraints are presented in Figure \ref{realfig1}. DNN is also utilized as the unfair baseline.

As shown in Figure \ref{realfig1}, FSRL demonstrates superior trade-offs compared to other methods. Specifically, FSRL achieves higher accuracy for the same fairness and lower $\Delta DP$ for the same accuracy. These results suggest that FSRL effectively extracts information and delivers enhanced performance. Furthermore, FSRL exhibits the ability to achieve a range of balances between accuracy and $\Delta DP$ across a wide range. By placing greater emphasis on fairness, FSRL attains a smaller $\Delta DP$ while sacrificing the least accuracy compared to all other methods. Compared to FSRL, the performance of Dist-Fair is insensitive to the strength of fairness constraints across both datasets, which may be misled by the emphasis on the reconstruction ability. Due to the inherent instability of adversarial training, LAFTR exhibits a similar pattern to Dist-Fair on the UCI Adult dataset and shows a significant decline in predictive performance on the Heritage Health dataset. Although FCRL shows some trade-offs on both datasets, the relationship between accuracy and $\Delta DP$ remains unclear. Many results exhibit the same $\Delta DP$ but differing accuracies, suggesting that the algorithm’s convergence is ambiguous and its reliability is compromised. Among all the methods, FSRL stands out as the most reliable method, consistently achieving superior trade-offs between accuracy and $\Delta DP$ across all datasets.

\begin{figure}[H]
\centering
\includegraphics[width=5.8 in, height=2.8 in]{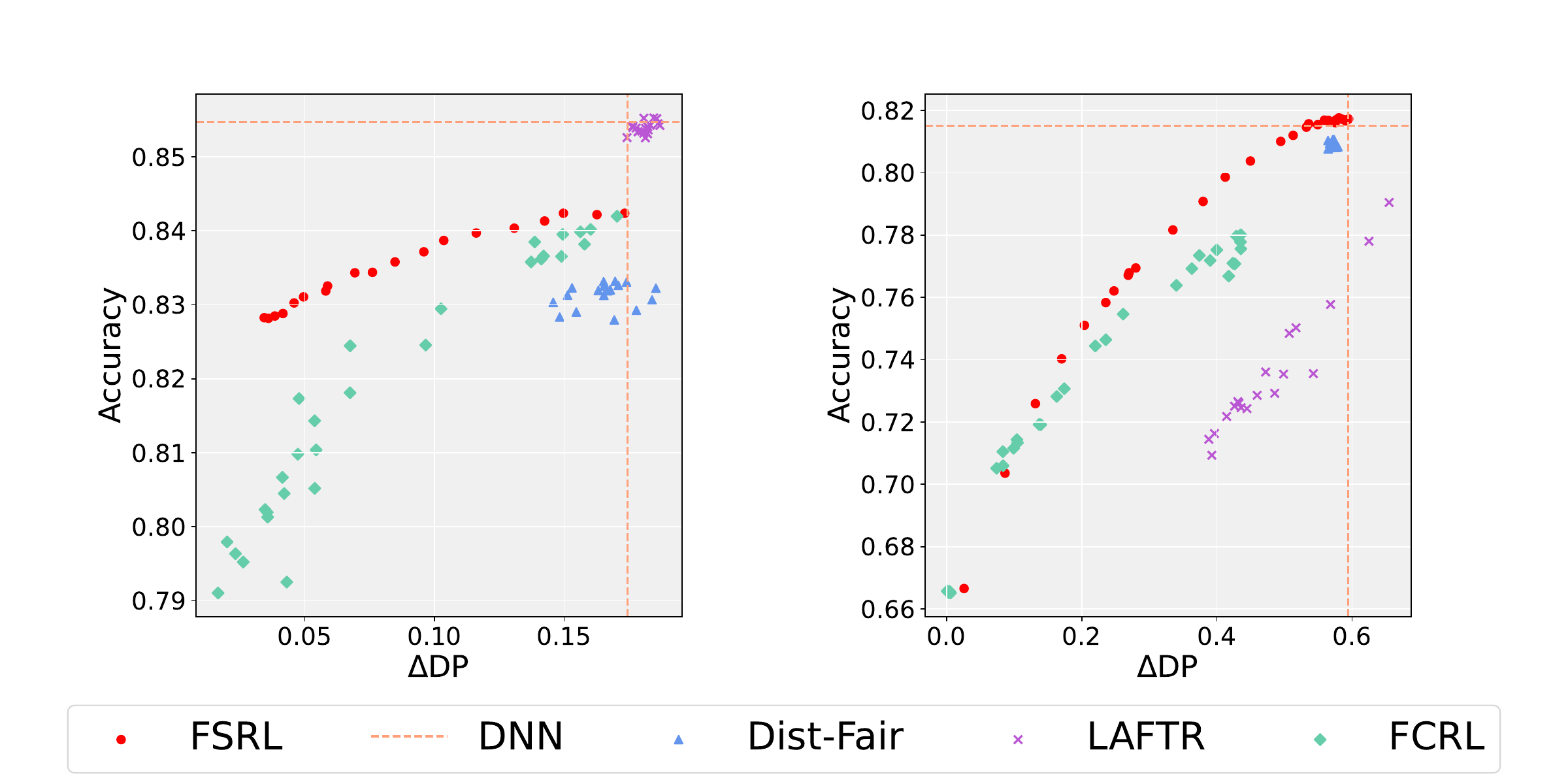}
\caption{By varying the coefficient of fair constraints, all methods achieve different trade-offs between Accuracy and $\Delta DP$. Left panel: UCI Adult dataset; right panel: Heritage Health dataset.}
\label{realfig1}
\end{figure}

We show the convenience of controlling the trade-off on real-world datasets through $\alpha$ in Figure \ref{realfig2}. There is a clear trend that $\Delta DP$ gradually increases and the error rate gradually decreases with the improvement of $\alpha$. As $\alpha$ decreases, the representation places more emphasis on sensitive information, achieving a smaller $\Delta DP$. Combined with observations in Figure \ref{realfig1}, with an appropriate choice of $\alpha$, FSRL could achieve optimal performance while satisfying the same fairness constraints.

\begin{figure}[H]
\centering
\includegraphics[width=5.8 in, height=2.8 in]{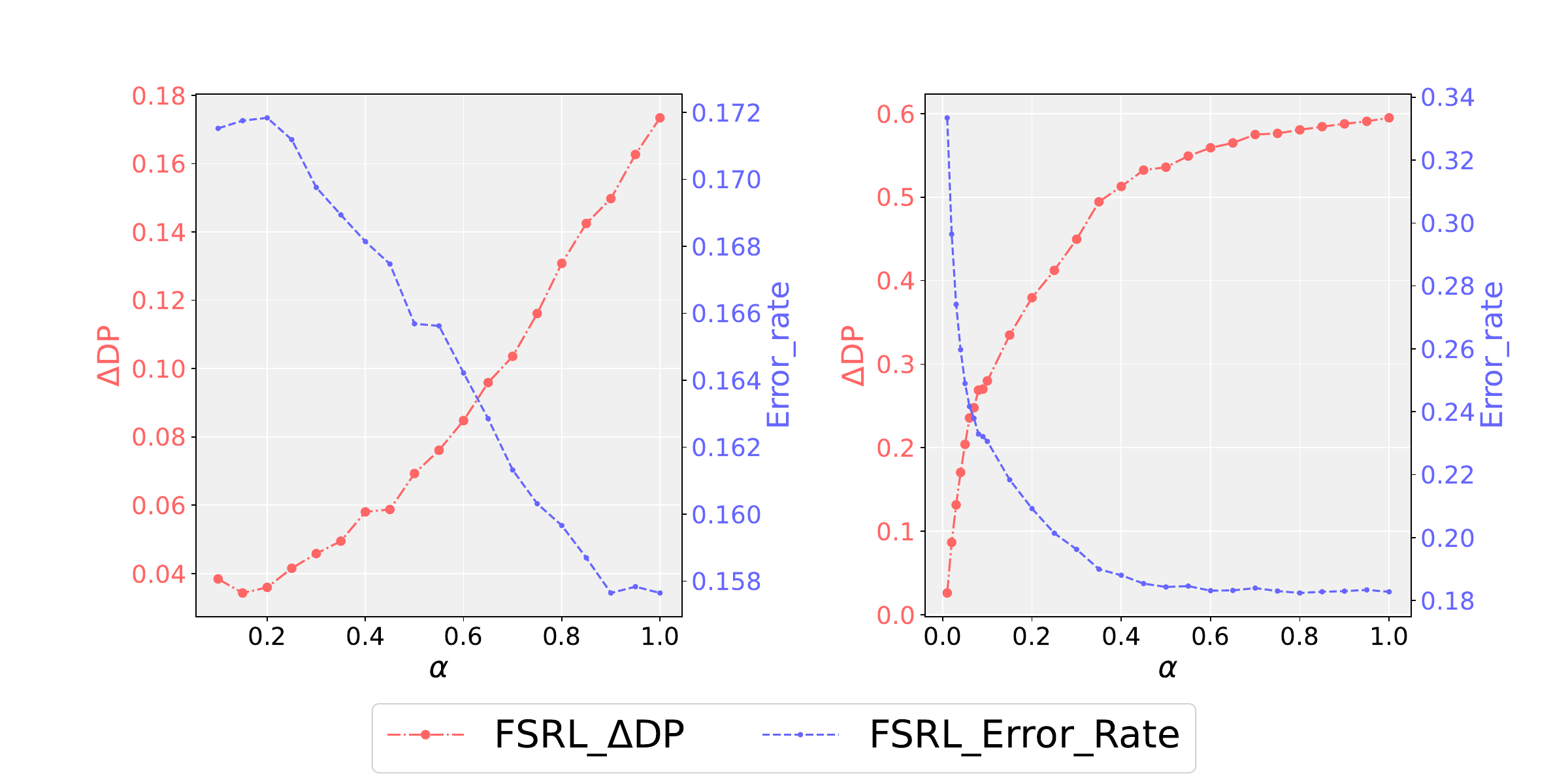}
\caption{On both the UCI Adult dataset and the Heritage Health dataset, FSRL achieves different balances between prediction and fairness by varying $\alpha$. Left panel: UCI Adult dataset; right panel: Heritage Health dataset.}
\label{realfig2}
\end{figure}

\subsection{Bias-in-Bios data}
The Bias-in-Bios dataset is a collection of online professional biographies designed to study and mitigate bias in natural language processing models \citep{dearteaga2019bias}. It comprises approximately 393,000 biographies of individuals from 28 professions, annotated with gender labels. We utilize this dataset to construct a classification model that predicts an individual's profession based on their biography. Gender is set as the sensitive attribute, and all words related to gender information are removed from the biographies.

In our analysis, we employ a pre-trained BERT model \citep{devlin2018bert} as the encoder, using its CLS embedding for the downstream task. Following \citet{de2019bias}, the dataset is divided into three parts: 255,710 instances for training (65\%), 39,369 for validation (10\%), and 98,344 for testing (25\%). Without fine-tuning the encoder, we apply FSRL to adjust the unfair BERT CLS embeddings. For representation learning, we use a two-layer multilayer perceptron (MLP) with a hidden dimension of 768, and a logistic regression model is employed for classification.

We compare FSRL with the following recently developed text debiasing representation learning techniques: \text{INLP} \citep{ravfogel2020null}, \text{AdvEns} \citep{han2021diverse}, and \text{SUP} \citep{shi2024debiasing}. INLP iteratively projects the representation into the null space of the sensitive variable, while SUP learns a fair representation by subtracting the linear sufficient representation of the sensitive variable. Similar to LAFTR, AdvEns employs adversarial training using an ensemble-based adversary.

Below, we use \text{DNN} to denote the unfair model trained with cross-entropy loss. For a more comprehensive comparison, we considered the widely used metric \text{GAP}  \citep{de2019bias,ravfogel2020null}, defined as
\begin{align*}
    \text{GAP} = \sqrt{\frac{1}{|\mathcal{M}|}\sum_{y\in \mathcal{M}}(\text{GAP}_{A,y}^{\text{TPR}})^2},
\end{align*}
where $\mathcal{M}$ is the collection of all possible values of $Y$ and  is the difference of true positive rate (TPR) between binary sensitive variable $A$ in class $y$. $\text{GAP}_{A,y}^{\text{TPR}}$ is strongly correlated with another fairness metric, Equalized Odds (EO), requiring the condition independence between prediction $\hat{Y}$ and sensitive variable $A$ given target $Y$. EO can be satisfied by ensuring the conditional independence between representation $R(X)$ and $A$ and we leave the investigation of their conditional independence for future work.
More details of the experiment are presented in Section B of the Supplementary Materials.

\begin{table}[H]
    \centering
    \resizebox{0.6\textwidth}{!}{
    \begin{tabular}{cc cc}
    \hline
     Dataset & Method     & Accuracy$\uparrow$ & GAP$\downarrow$   \\
         \hline
\multirow{5}{*}{Bias-in-Bios} & DNN    & \textbf{81.10}$\pm$0.08  & 16.30$\pm$0.37\\
                & INLP  & 74.33$\pm$0.45  & 7.86$\pm$0.42 \\
                & AdvEns   & 60.15$\pm$0.58  & \textbf{6.10}$\pm$0.76 \\
                & SUP   & 79.26 $\pm$0.08  & 15.66$\pm$1.06\\
                & FSRL  & 80.38$\pm$0.06  & 12.74$\pm$0.17   \\
         \hline
    \end{tabular}
    }
    \caption{The performance of different methods on Bias-in-Bios dataset.}
    \label{tabel1}
\end{table}

We set $\alpha$ to 0.5 and present the results that are averaged over 5 runs in Table \ref{tabel1}. Emphasizing both sufficiency and fairness, FSRL achieves both higher accuracy and smaller $\Delta DP$ than SUP. Moreover, employing the general nonlinear representation function enables FSRL to capture informative features that are independent of $A$ more effectively.

\begin{figure}[H]
    \centering
    \includegraphics[width=5.8 in, height=2.8 in]{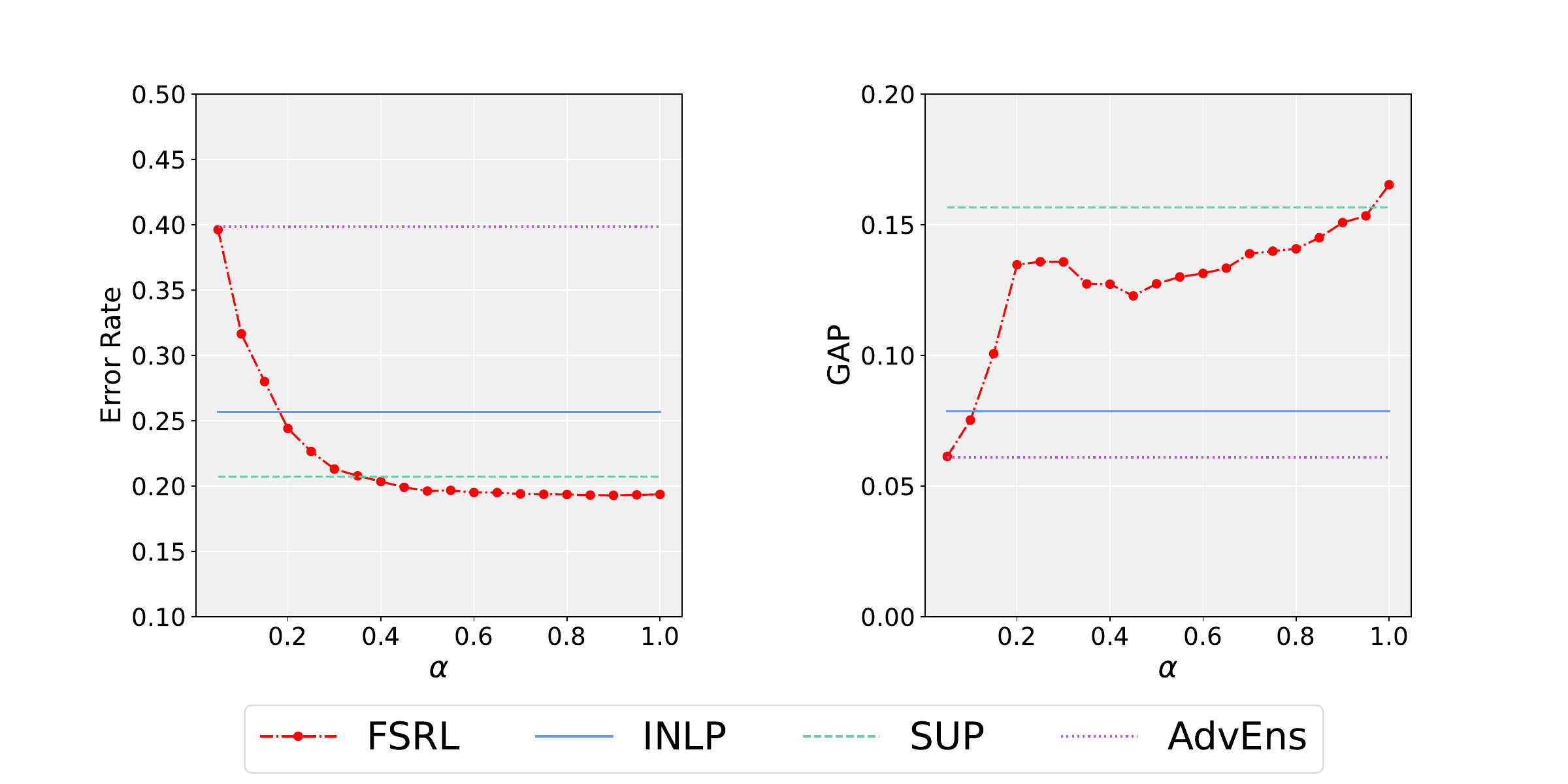}
    \caption{The performance of FSRL on Bias-in-Bios dataset with varying $\alpha$ values.}
    \label{realfig4}
\end{figure}

To better compare FSRL with AdvEns and INLP, we illustrate the trends of Error Rate and GAP in Figure \ref{realfig4} as $\alpha$ changes from 0.05 to 1 with step 0.05. AdvEns achieves the best fairness but it sacrifices too much predictive performance, leading to undesired overall results. INLP achieves a reasonable balance between accuracy and fairness. However, INLP requires projecting the embedding to the null space of predicting $A$, lacking the flexibility to adjust the balance. Compared to INLP, FSRL can effectively control the balance between predictive performance and fairness by adjusting $\alpha$. These observations about effectively extracting information and mitigating biases suggest that FSRL could achieve great trade-offs in text data and have huge potential in multiple tasks.

\begin{figure}[H]
    \centering
    \includegraphics[width=6.4 in, height=4.2 in]{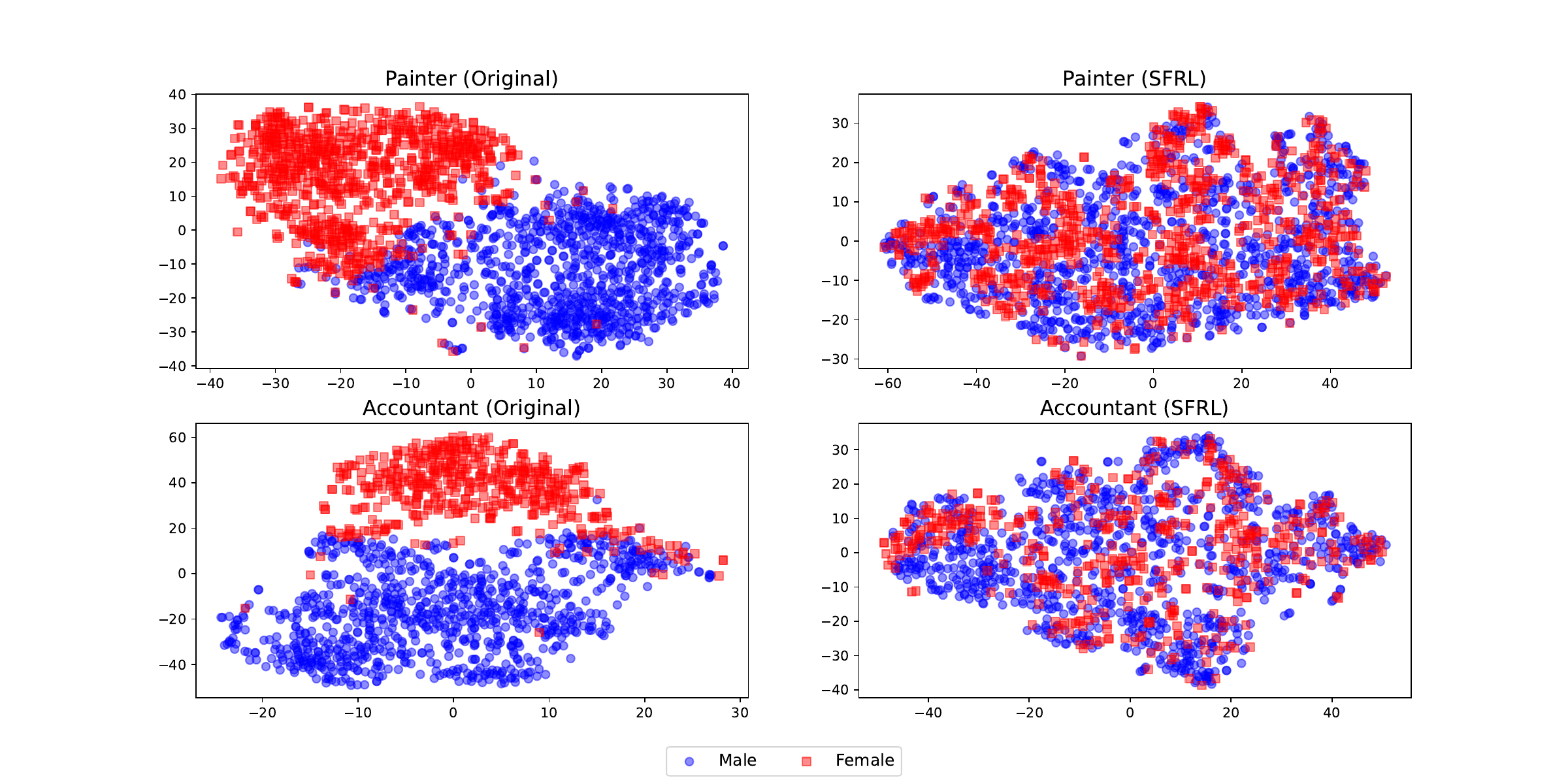}
    \caption{t-SNE projection of the embeddings. The original embeddings:
Painter (top left) and Accountant (bottom left). The FSRL representations: Painter (top right) and Accountant (bottom right).}
    \label{realfig6}
\end{figure}

To further assess the effectiveness of FSRL, we present t-SNE visualizations of the original BERT embeddings and the representations learned using FSRL with
$\alpha=0.5$  in Figure \ref{realfig6}. In the original BERT embeddings, both the Painter and Accountant professions are divided into two distinct groups based on gender, indicating that a significant amount of sensitive information is embedded. After applying FSRL, these subgroups become mixed, making it difficult to distinguish gender from the representation. This shows that FSRL effectively protects sensitive information within the data.

\begin{figure}[H]
    \centering
    \includegraphics[width=6.4 in, height=4.2 in]{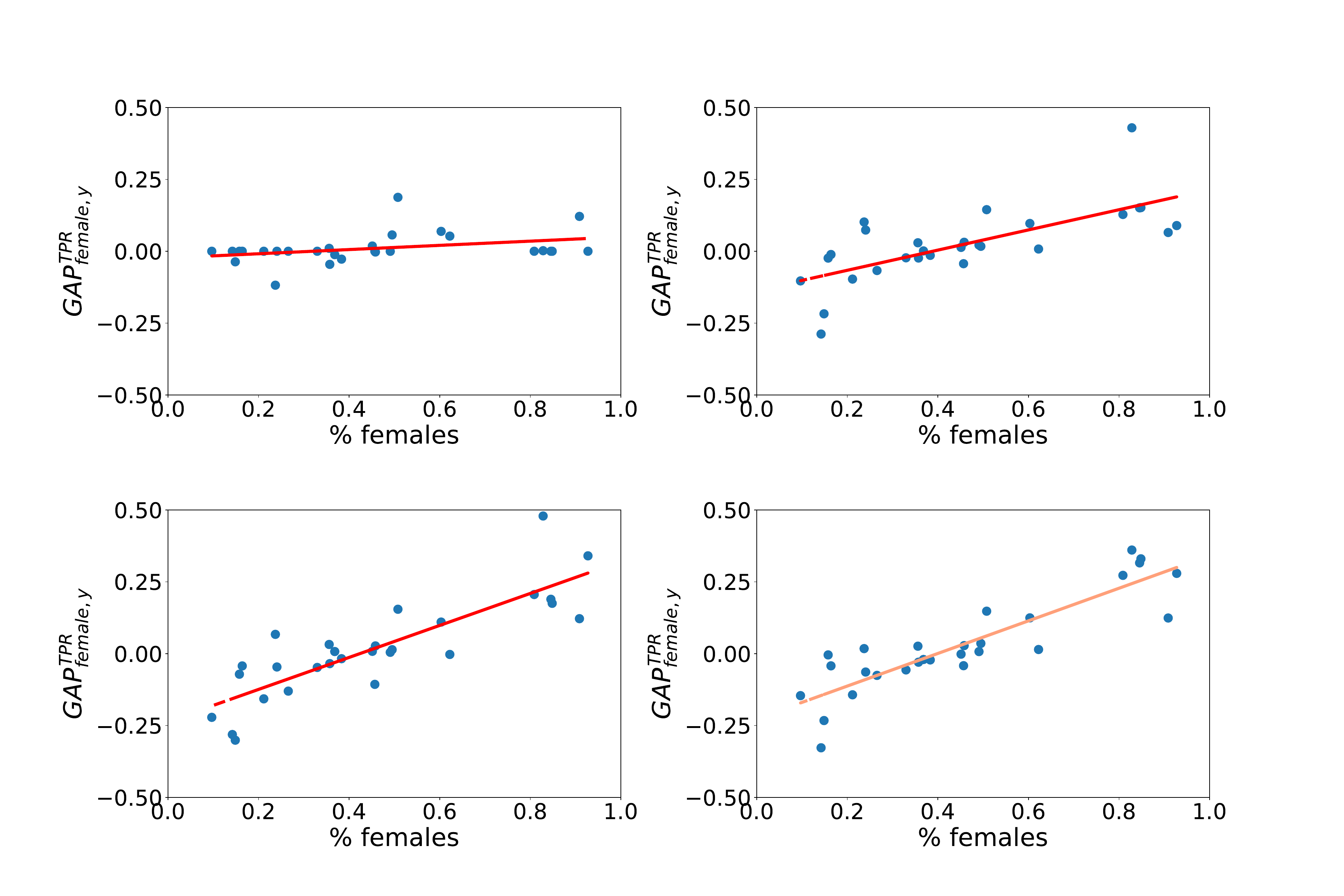}
    \caption{The correlation between $\text{GAP}^{\text{TPR}}_{
\text{female},y}$ and the relative proportion of females in profession $y$ in results from FSRL with different $\alpha$ values and unfair baseline DNN. Each point represents a profession.}
    \label{realfig5}
\end{figure}

We also illustrate the correlation between  $\text{GAP}^{\text{TPR}}_{\text{female},y}$
 and the relative proportion of women in profession $y$ in Figure \ref{realfig5}, where each point represents a profession. The first three subplots correspond to
$\alpha$ values of 0.05, 0.5, and 1.0, respectively, while the last subplot shows the results of a DNN without any fairness constraints. As
$\alpha$ increases, there is a clear trend of increasing correlation. When
$\alpha=0.5,$ FSRL eliminates most of the sensitive information, resulting in a correlation close to zero. As $\alpha$ increases to 1, the representation is only required to be sufficient, exhibiting a pattern similar to that of the DNN. These observations demonstrate that FSRL can effectively remove sensitive information from the pre-trained BERT model and achieve a balanced representation that maintains both prediction accuracy and fairness.

\section{Conclusion}
In this paper, we propose FSRL, a fair representation learning method based on statistical independence and sufficiency. We  characterize the properties of fair sufficient representation learning and introduce a direct approach to estimate the desired representation. Our work enhances the understanding of fair representation learning and provide insights into the effectiveness of balancing sufficiency and fairness in representation. We consider the distance covariance as the information metric and use a convex combination to build the objective function. Experiment results demonstrate that FSRL achieves improved trade-offs in datasets with various structures. They also indicate that FSRL could achieve the desired balance between prediction and fairness under different fair constraints and demonstrate the great potential for its applications.

In this paper, we focus on demographic parity and leave the discussion of other notions of fairness such as equalized odds for future exploration. We also emphasize that it is possible to consider other information measures in application, such as the mutual information and Hilbert-Schmidt independence criterion.
Future research could explore applying fair representation learning to fine-tune large language models to ensure their outcomes are not influenced by sensitive information.

\bigskip
\bigskip
\appendix

\renewcommand{\thetable}{A\arabic{table}}
\renewcommand{\thefigure}{A\arabic{figure}}
\renewcommand{\theequation}{\thesection.\arabic{equation}}
\renewcommand{\thetheorem}{\thesection.\arabic{theorem}}
\renewcommand{\theassumption}{\thesection.\arabic{assumption}}
% \spacingset{1.9}
%\spacingset{1.20} %1.45}

%\newpage
\bigskip\noindent
\textbf{\LARGE Appendix}

\medskip\noindent
This appendix contains the details in implementations,  proofs of the results in  the paper.

\section{Technical Results}
\subsection{Proof of Theorem 1}
\begin{lemma}[Theorem 1 in \citet{huang2024deep}]
\label{lemma111}
    For random variables $(X,Y)\in \mathbb{R}^p\times \mathbb{R}^q$, we have $R^*(X)\in \argmin_{R(X)\sim N(0,I_d)} -\mathbb{V}[R(X),Y]$ provided
    \begin{align*}
        X\indep Y |R^*(X)
    \end{align*}
    holds.
\end{lemma}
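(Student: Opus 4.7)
The plan is to deduce the lemma directly from the two structural properties of the dependence measure $\mathbb{V}$ that the paper records in Section 3: property (b), which asserts $\mathbb{V}[R(X), Y] \leq \mathbb{V}[X, Y]$ for every measurable $R$ (a contraction / data-processing inequality), and property (c), which asserts $\mathbb{V}[X, Y] = \mathbb{V}[R^*(X), Y]$ if and only if $X \indep Y \mid R^*(X)$. Granting these two properties, the result follows from a short two-line chain of comparisons; the real work is verifying (b) and (c) for the specific functional given by distance covariance.

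Concretely, I would proceed as follows. Fix an arbitrary feasible candidate $R$, i.e., a measurable map with $R(X) \sim N(0, I_d)$, and also assume that $R^*$ is feasible (this is implicit in the argmin statement). Applying (b) to $R$ yields
\begin{equation*}
\mathbb{V}[R(X), Y] \;\leq\; \mathbb{V}[X, Y].
\end{equation*}
Invoking the hypothesis $X \indep Y \mid R^*(X)$ and the "if" direction of (c), the right-hand side simplifies to $\mathbb{V}[R^*(X), Y]$, producing $\mathbb{V}[R(X), Y] \leq \mathbb{V}[R^*(X), Y]$ for every feasible $R$. Negating gives $-\mathbb{V}[R(X), Y] \geq -\mathbb{V}[R^*(X), Y]$, which is exactly the statement that $R^* \in \argmin_{R(X) \sim N(0, I_d)} -\mathbb{V}[R(X), Y]$.

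The main obstacle is thus the verification of the two abstract properties for distance covariance itself. For property (b), I would work from the Fourier integral definition: using the tower property of conditional expectation applied to the integrand $|\phi_{U, V}(s, t) - \phi_U(s)\phi_V(t)|^2$, and substituting $U = R(X)$, one obtains pointwise domination by the integrand for $(X, Y)$ after an appropriate change of variables in $s$, which integrates to the contraction inequality. For the "if" direction of (c), which is what the argument actually uses, the key observation is that under $X \indep Y \mid R^*(X)$ the joint characteristic function factorizes as $\phi_{X, Y}(s, t) = \mathbb{E}\bigl[\phi_{Y \mid R^*(X)}(t)\, e^{\mathfrak{i} s^{\top} X}\bigr]$, which after substitution into the Fourier definition of $\mathbb{V}$ collapses the $(X, Y)$ double integral to that of $(R^*(X), Y)$. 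Since the lemma is cited verbatim from Huang et al.\ (2024), I would rely on their detailed verification of these two properties and present in the body of the proof only the compact deduction laid out in the previous paragraph.
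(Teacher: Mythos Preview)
Your deduction from properties (b) and (c) is correct and is the natural route to the statement. Note, however, that the paper does not actually supply a proof of this lemma: it is stated as a direct citation of Theorem~1 in \citet{huang2024deep} and used as a black box in the proof of Theorem~1 of the present paper. So there is no ``paper's own proof'' to compare against; your argument simply fills in what the paper omits by pointing back to the abstract properties listed at the beginning of Section~3.1. One minor caveat: your sketch for verifying property~(b) via a ``change of variables in $s$'' would only work for linear $R$; for general measurable $R$ the contraction inequality for distance covariance requires a different argument (essentially a data-processing inequality on the characteristic-function integrand). Since you explicitly defer the verification of (b) and (c) to the cited reference, this does not affect the correctness of the short deduction you present.
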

With the help of Lemma \ref{lemma111}, we give the proof of Theorem 1.
\begin{proof}
    As $Y\indep A$, we have $R_0$ that satisfies
    \begin{align*}
        &X\indep Y|R_0(X)\\
        &R_0(X)\indep A
    \end{align*}
    and for any $R(X)\sim N(0,I_d)$, $\alpha\in(0,1]$ and $\lambda>0$, we have
    \begin{align*}
        \mathcal{L}(R_0)-\mathcal{L}(R)&=\alpha\big(-\mathbb{V}[R_0,Y]+\mathbb{V}[R,Y]\big) + (1-\alpha)\big(\mathbb{V}[R_0,A]-\mathbb{V}[R,A]\big)\\
        &+\lambda \big(\mathbb{D}[R_0,\gamma_d]-\mathbb{D}[R,\gamma_d]\big)\\
        &=\alpha\big(-\mathbb{V}[R_0,Y]+\mathbb{V}[R,Y]\big) - (1-\alpha)\mathbb{V}[R,A]\\
        &\leq \alpha\big(-\mathbb{V}[R_0,Y]+\mathbb{V}[R,Y]\big)\\
        &\leq 0.
    \end{align*}
The second equality holds since $R_0\indep A$ and both $R_0$ and $R$ follow $N(0,I_d)$. The third inequality holds since $\alpha\in(0,1]$ and $\mathbb{V}[R,A]\geq 0$. The last inequality holds since $R_0$ is a sufficient representation for $(X,Y)$.
\end{proof}
\subsection{Proof of Theorem 2}
In this subsection, we analyze the bound of excess risk and build theoretical foundations for the learned representation. Before presenting the detailed proofs, we first recall some assumptions of the neural network and random variables. Denote $R_0(X)=\argmin \mathcal{L}(R)$.

\begin{assumption}\label{Ass1}
      Each component of $R_0(X)$ is $\beta$-H{\"o}lder smooth on $[0,1]^p$ with constant $\mathcal{B_0}$, that is $R_{0,i}\in \mathcal{H}^\beta([-0,1]^p,B_0), i=1,...,d$.
 \end{assumption}

 \begin{assumption}\label{Ass2}
      The support of $X$ is contained in a compact set $[0,1]^p$. $Y$ and $A$ are bounded almost surely: $||Y||\leq B_1$ , $||A||\leq B_2$, a.s..
 \end{assumption}

 \begin{assumption}\label{Ass3}
     Parameters of the representation network $\mathbf{R}\equiv \mathbf{R}_{\mathcal{D,W,S,B}}$ : depth $\mathcal{D}= \mathcal{O}(\log n\log_2(\log n))$, width $\mathcal{W} = \mathcal{O}(n^{\frac{p}{2(2\beta+p)}}\log_2(n^{\frac{p}{2(2\beta+p)}}/\log n)/\log n)$, size $\mathcal{S} = \\ \mathcal{O}(dn^{\frac{p}{2\beta+p}}/\log^4(npd))$, boundary $\sup_{X\in[0,1]^p}||R(X)||_2\leq\mathcal{B}$.
 \end{assumption}

 Denote $\hat{R}(X)\in\argmin_{R\in\mathbf{R}_{\mathcal{D,W,S,B}}}\mathcal{L}_n(R)$ and for any $\tilde{R}(X) \in \mathbf{R}_{\mathcal{D,W,S,B}}$, we have
\begin{align*}
    \mathcal{L}(\hat{R})-\mathcal{L}(R_0)&=\mathcal{L}(\hat{R})-\mathcal{L}_n(\hat{R}) +\mathcal{L}_n(\hat{R})-\mathcal{L}_n(\tilde{R})+\mathcal{L}_n(\tilde{R}) -\mathcal{L}(\tilde{R}) + \mathcal{L}(\tilde{R})-\mathcal{L}(R_0)\\
    &\leq 2\sup_{R\in\mathbf{R}_{\mathcal{D,W,S,B}}}|\mathcal{L}(R)-\mathcal{L}_n(R)| + \inf_{R\in\mathbf{R}_{\mathcal{D,W,S,B}}}|\mathcal{L}(R)-\mathcal{L}(T_{B_n}R_0)|\\
 \end{align*}
 where we use the definition of $\hat{R}$ and the feasibility of $\tilde{R}$. We analyze the excess risk
 \begin{align*}
     \mathbb{E}_{(X_i,Y_i,A_i)_{i=1}^n}[\mathcal{L}(\hat{R})-\mathcal{L}(R_0)]
 \end{align*}
 by providing bound for these two error terms:
 \begin{itemize}
     \item the \textbf{approximation error}: $\inf_{\mathcal{D,W,S,B}}|\mathcal{L}(R)-\mathcal{L}(T_{B_n}R_0)|$;
     \item  the \textbf{statistical error}: $\sup_{\mathcal{D,W,S,B}}|\mathcal{L}(R)-\mathcal{L}_n(R)|$.
 \end{itemize}
 Without loss of generality, we set $\alpha$ to $\frac{1}{2}$  and $\lambda$ to 1 in the following analysis.

\subsubsection{The Approximation Error}
\begin{lemma}[Corollary 3.1 in \citet{jiao2023deep}]\label{lemma 4}
    If $f \in \mathcal{H}^\beta([-0,1]^p,B_0)$ with $\beta=s+r$, $s\in \mathbb{N}_0$ and $r\in (0,1]$. For any $M,N\in \mathbb{N}^{+}$, there exists a function $\phi$ implemented by a ReLU network with width $W=38(\lfloor\beta\rfloor+1)^23^pp^{\lfloor\beta\rfloor+1}N\lceil\log_2(8N)\rceil$ and depth $D=21(\lfloor \beta\rfloor+1)^2M\lceil\log_2(8M)\rceil$ such that
\begin{align*}
    |f(x)-\phi(x)|\leq 19B_0(\lfloor\beta\rfloor+1)^2p^{\lfloor\beta\rfloor+(\beta\vee1)/2}(NM)^{-2\beta/p}, x\in[0,1]^p.
\end{align*}
\end{lemma}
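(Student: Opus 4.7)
The plan is to prove this approximation bound by combining a localized Taylor expansion of $f$ with an efficient ReLU implementation of the resulting piecewise polynomial, in the style of Yarotsky refined by the depth-width trade-off used in Lu et al.\ / Shen-Yang-Zhang and ultimately Jiao et al. First, I would partition $[0,1]^p$ into roughly $K^p$ cubes of side-length $1/K$ centred at points $x_\alpha$, with $K$ eventually chosen so that $K \asymp (NM)^{2/p}$. Attach to each centre the Taylor polynomial $P_{x_\alpha}(x)$ of order $s=\lfloor\beta\rfloor$. A partition of unity $\{\psi_\alpha\}$ subordinate to a slightly enlarged overlapping cover, where each point of $[0,1]^p$ lies in at most $3^p$ of the supports, gives
\[
\Big|\,f(x)-\sum_\alpha \psi_\alpha(x)\,P_{x_\alpha}(x)\,\Big| \;\lesssim\; B_0\,p^{\lfloor\beta\rfloor+(\beta\vee 1)/2}\,K^{-\beta},
\]
by the Hölder estimate on the Taylor remainder. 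This is the source of the factor $3^p$ in the width and of the combinatorial $(\lfloor\beta\rfloor+1)^2$ from counting multi-indices up to order $s$.

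Second, I would realize both the $\psi_\alpha$'s and the monomials needed inside $P_{x_\alpha}$ by ReLU sub-networks. The partition functions $\psi_\alpha$ are differences of shifted and clipped hat functions, implementable exactly by a shallow ReLU block of width $O(3^p p^{\lfloor\beta\rfloor+1} N)$. The key technical ingredient is Yarotsky's sawtooth approximation of $t\mapsto t^2$: a depth-$L$ ReLU network achieves error $2^{-2L}$, and polarization yields multiplication $(u,v)\mapsto uv$ via $uv = \tfrac14((u+v)^2-(u-v)^2)$ at the same depth. Iterating $s$ such multiplications builds an $\varepsilon$-accurate approximation of each monomial $x^{\gamma}$ with $|\gamma|\le s$, and combining this with the bit-extraction / depth-width coupling trick allows the product of the Yarotsky depth $M\lceil\log_2(8M)\rceil$ with the effective resolution $N\lceil\log_2(8N)\rceil$ to enter the error as $(NM)^{-2\beta/p}$ rather than the naive $(NM)^{-\beta/p}$. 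This is the step that yields the sharp rate claimed in the lemma.

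Third, I would assemble the global network by placing these $\psi_\alpha$-blocks and polynomial-blocks in parallel over all cubes $\alpha$ and all multi-indices of order $\le s$, then summing the outputs through one linear layer. The bookkeeping is routine but must be done carefully: verifying that the parallel composition does not exceed the depth $21(\lfloor\beta\rfloor+1)^2 M\lceil\log_2(8M)\rceil$ and width $38(\lfloor\beta\rfloor+1)^2 3^p p^{\lfloor\beta\rfloor+1}N\lceil\log_2(8N)\rceil$, by observing that monomial evaluation and partition-of-unity evaluation share the same depth and run concurrently. The two error contributions---the Taylor residual, of order $B_0 p^{\lfloor\beta\rfloor+(\beta\vee 1)/2}K^{-\beta} \asymp B_0 p^{\lfloor\beta\rfloor+(\beta\vee 1)/2}(NM)^{-2\beta/p}$, and the multiplication/monomial error, which is made negligibly small by choosing the internal precision a constant factor tighter---are summed, and the universal constant $19$ drops out of the bookkeeping.

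The main obstacle will be the depth-width interplay that produces the exponent $-2\beta/p$ instead of $-\beta/p$. A direct Yarotsky-style argument that treats depth and width symmetrically gives only the weaker rate; obtaining the stated bound forces one to couple $N$ and $M$ via the bit-extraction construction, so that extra depth effectively doubles the resolution contributed by width. I would handle this by isolating three modules---the partition of unity, the polynomial/monomial approximator, and the aggregating linear layer---and only combining their sizes additively and their errors by a single triangle inequality, deferring to the Jiao et al.\ construction for the precise bit-extraction bookkeeping rather than reproving it from scratch.
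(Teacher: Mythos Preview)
The paper does not prove this lemma at all: it is quoted verbatim as Corollary~3.1 of \citet{jiao2023deep} and used as a black box in the subsequent approximation-error bound. So there is no ``paper's own proof'' to compare against; the authors simply invoke the result.

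Your sketch is a faithful high-level outline of the actual argument in Jiao et al.\ (and its predecessors Yarotsky, Lu--Shen--Yang--Zhang): local Taylor expansion of order $s=\lfloor\beta\rfloor$ on a grid of side $K^{-1}$, a partition of unity with $3^p$-fold overlap, ReLU realization of the hat functions and monomials via the squaring trick, and the crucial bit-extraction coupling of depth and width that upgrades the naive rate $K^{-\beta}\asymp (NM)^{-\beta/p}$ to $(NM)^{-2\beta/p}$. You correctly identify that last step as the heart of the matter and, appropriately, defer its bookkeeping to the original source. For the purposes of this paper nothing more is needed than the citation, so your proposal is strictly more than the paper supplies; if anything, you could simply write ``See \citet{jiao2023deep}, Corollary~3.1'' and move on.
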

\begin{lemma}\label{lemma 5}
    Suppose assumption \ref{Ass1}, \ref{Ass2},\ref{Ass3} hold, then \textbf{the Approximation Error}
\begin{align*}
    \inf_{R\in \mathbf{R}_{\mathcal{D,W,S,B}}}|\mathcal{L}(R)-\mathcal{L}(R_0)|&\leq  \mathcal{O}(n^{\frac{-\beta}{2\beta+p}}).
\end{align*}
\end{lemma}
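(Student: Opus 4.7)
The plan is to construct an explicit element $\tilde{R} \in \mathbf{R}_{\mathcal{D,W,S,B}}$ that approximates $R_0$ componentwise in $L^\infty$, and then transfer that approximation error through the functional $\mathcal{L}$ by exploiting the Lipschitz-type stability of distance covariance and energy distance under bounded perturbations of one argument. First, I would apply Lemma \ref{lemma 4} to each coordinate $R_{0,i} \in \mathcal{H}^\beta([0,1]^p, B_0)$, $i=1,\ldots,d$, obtaining a ReLU subnetwork $\phi_i$ with width $\mathcal{O}(N \log N)$ and depth $\mathcal{O}(M \log M)$ such that $\|R_{0,i} - \phi_i\|_\infty \le C_{\beta,p,B_0}\,(NM)^{-2\beta/p}$. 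Stacking the $\phi_i$ in parallel yields a candidate $\tilde{R}$ whose depth, width, and size (after truncation to the ball of radius $\mathcal{B}$, which is inactive once $\mathcal{B}$ exceeds the uniform bound on $R_0$ guaranteed by Assumption \ref{Ass1}) fit inside $\mathbf{R}_{\mathcal{D,W,S,B}}$ provided we choose $N \asymp n^{p/(2(2\beta+p))}/\log n$ and $M \asymp \log n \cdot \log_2 \log n$. This choice is dictated by matching the width/depth specification in Assumption \ref{Ass3}, and gives $\|\tilde{R} - R_0\|_\infty \le C_1\, n^{-\beta/(2\beta+p)}$.

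Next, I would split the error by the triangle inequality into three pieces,
\begin{align*}
|\mathcal{L}(\tilde R)-\mathcal{L}(R_0)| \le {} & \tfrac12 \bigl|\mathbb{V}[\tilde R(X),Y]-\mathbb{V}[R_0(X),Y]\bigr|\\
& + \tfrac12\bigl|\mathbb{V}[\tilde R(X),A]-\mathbb{V}[R_0(X),A]\bigr|\\
& + \bigl|\mathbb{D}[\tilde R(X),\gamma_d]-\mathbb{D}[R_0(X),\gamma_d]\bigr|,
\end{align*}
and bound each term by the $L^\infty$ distance $\|\tilde R - R_0\|_\infty$. For the distance-covariance terms, I would use the kernel representation
$\mathbb{V}[U,V]=\mathbb{E}\|U-U'\|\|V-V'\|-2\mathbb{E}\|U-U'\|\|V-V''\|+\mathbb{E}\|U-U'\|\mathbb{E}\|V-V'\|$. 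Replacing $R_0(X)$ by $\tilde{R}(X) = R_0(X) + \Delta(X)$ with $\|\Delta\|_\infty \le \varepsilon := \|\tilde R-R_0\|_\infty$, the reverse triangle inequality gives $\bigl|\|\tilde R(X)-\tilde R(X')\|-\|R_0(X)-R_0(X')\|\bigr|\le 2\varepsilon$. Combined with the almost-sure bounds $\|Y\|\le B_1$, $\|A\|\le B_2$ (Assumption \ref{Ass2}) and the network output bound $\|\tilde R\|_\infty\le \mathcal{B}$, each of the three summands in $\mathbb{V}$ is perturbed by at most a constant multiple of $\varepsilon$, yielding a Lipschitz bound
$|\mathbb{V}[\tilde R(X),Y]-\mathbb{V}[R_0(X),Y]| \le C_2(B_1,\mathcal{B})\,\varepsilon$,
and analogously for the $A$ term.

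For the energy-distance term, the same strategy applies: write
$\mathbb{D}[U,\gamma_d]=2\mathbb{E}\|U-\gamma_d\|-\mathbb{E}\|U-U'\|-\mathbb{E}\|\gamma_d-\gamma_d'\|$,
and use $\bigl|\|\tilde R(X)-\gamma_d\|-\|R_0(X)-\gamma_d\|\bigr|\le \varepsilon$ together with the triangle inequality on the cross term, noting that $\mathbb{E}\|\gamma_d\|<\infty$, to obtain a bound $|\mathbb{D}[\tilde R(X),\gamma_d]-\mathbb{D}[R_0(X),\gamma_d]|\le C_3\,\varepsilon$.

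Putting the three pieces together gives $|\mathcal{L}(\tilde R) - \mathcal{L}(R_0)| \le C\,\|\tilde R - R_0\|_\infty \le C'\, n^{-\beta/(2\beta+p)}$, which, upon taking the infimum over $R \in \mathbf{R}_{\mathcal{D,W,S,B}}$, proves the claim. The main obstacle is verifying the Lipschitz-type stability of the distance covariance cleanly — one has to be careful that the stability holds uniformly in the unknown distribution of $(X,Y,A)$, which is where Assumption \ref{Ass2} (bounded $Y$ and $A$) and the network output bound $\mathcal{B}$ are essential; without them the cross terms $\mathbb{E}\|U-U'\|\|V-V''\|$ would not be controllable. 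A secondary bookkeeping task is to check that the depth, width, and size prescribed by Lemma \ref{lemma 4} for the target accuracy $\varepsilon \asymp n^{-\beta/(2\beta+p)}$ are consistent with the budget in Assumption \ref{Ass3}; this is a direct computation once $N$ and $M$ are chosen as above.
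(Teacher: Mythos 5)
Your proposal is correct and follows essentially the same route as the paper's proof: coordinatewise application of the ReLU approximation result (Lemma \ref{lemma 4}) with $N,M$ matched to Assumption \ref{Ass3}, stacking into a network $\tilde R$, and then transferring $\|\tilde R-R_0\|$ through $\mathcal{L}$ via the Lipschitz-type stability of $\mathbb{V}$ and $\mathbb{D}$ in their first argument, using the kernel representations, the boundedness of $Y$ and $A$, and the triangle inequality. The only cosmetic differences are that you track a uniform ($L^\infty$) approximation error where the paper bounds $\mathbb{E}\|\tilde R-R_0\|$, and you explicitly note the truncation at radius $\mathcal{B}$ is inactive, neither of which changes the argument.
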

\begin{proof}

Due to the assumption that $R_{0,i} \in \mathcal{H}^\beta([-0,1]^p,B_0) $, then
 $ \forall i$, $ 1\leq i\leq d$, by Lemma \ref{lemma 4}, there exists a ReLU network $\overline{R}_i$ with width $W=38(\lfloor\beta\rfloor+1)^23^pp^{\lfloor\beta\rfloor+1}N\lceil\log_2(8N)\rceil$ and depth $D=21(\lfloor \beta\rfloor+1)^2M\lceil\log_2(8M)\rceil$, such that
\begin{align*}
    |\overline{R}_i(x)-R_{0,i}(x)|\leq 19B_0(\lfloor\beta\rfloor+1)^2p^{\lfloor\beta\rfloor+(\beta\vee1)/2}(NM)^{-2\beta/p}, x\in[0,1]^p.
\end{align*}
Horizontally splicing these networks $\overline{R}_i$ as $\overline{R}$, we have
\begin{align*}
    \mathbb{E}||\overline{R}(x)-R_0(x)||&= \mathbb{E}\left[\sum_{i=1}^d(\overline{R}_i(x)-R_{0,i}(x))^2\right]^{1/2}\\
    &\leq \sqrt{d}\mathbb{E}\max_{1\leq i\leq d, \atop x\in [0,1]^p}|\overline{R}_i(x)-R_{0,i}(x)|\\
    &\leq 19\sqrt{d}B_0(\lfloor\beta\rfloor+1)^2p^{\lfloor\beta\rfloor+(\beta\vee1)/2}(NM)^{-2\beta/p}.
\end{align*}
Since $\mathbb{V}[X,Y]=\mathbb{E}[||X-X^{'}||(||Y-Y^{'}||-2||Y-Y^{''}|| +\mathbb{E}||Y-Y^{'}||)]$, we have
\begin{align}\label{vv}
    |\mathbb{V}[\overline{R},Y]-\mathbb{V}[R_0,Y]|&\leq \mathbb{E}\big| ||\overline{R}-\overline{R}^{'}||-||R_0-R_0^{'}||\big|\big(||Y-Y^{'}||-2||Y-Y^{''}|| +\mathbb{E}||Y-Y^{'}||\big)\nonumber\\
    &\leq 16B_1 \mathbb{E}\big| ||\overline{R}-R_0||\big|.
\end{align}
Due to $\mathbb{D}[X,Y]=2\mathbb{E}||X-Y||-\mathbb{E}||X-X'||-\mathbb{E}||Y-Y'||$, we have
\begin{align}\label{dd}
    |\mathbb{D}[\overline{R},\gamma_d]-\mathbb{D}[R_0,\gamma_d]|\nonumber
    & \leq 2\mathbb{E}\big|||\overline{R}-\overline{R}^{'}||-\mathbb{E}||R_0-R_0^{'}||\big| + \mathbb{E}\big|||\overline{R}-\gamma_d|| - \mathbb{E}||R_0-\gamma_d||\big|\nonumber\\
    &\leq 5 \mathbb{E}||\overline{R}-R_0||.
\end{align}
Inequality \eqref{vv} and \eqref{dd} hold because of the triangle inequality, and combining the two inequalities, we have
\begin{align}
   & \inf_{R\in \mathbf{R}_\mathcal{D,W,S,B}}|\mathcal{L}(R)-\mathcal{L}(R_0)|\nonumber\\
   &\leq |\mathcal{L}(\overline{R})-\mathcal{L}(R_0)| \nonumber\\
    &=\frac{1}{2}\big|\mathbb{V}[\overline{R},Y]-\mathbb{V}[R_0,Y]\big| +\frac{1}{2}\big|\mathbb{V}[\overline{R},A]-\mathbb{V}[R_0,A]\big|+ \big|\mathbb{D}[\overline{R},\gamma_d]-\mathbb{D}[R_0,\gamma_d]\big|\nonumber\\
    &\leq 8(B_1+B_2+5)\mathbb{E}||\overline{R}-R_0||\nonumber\\
    & \leq 152\sqrt{d}B_0(B_1+B_2+5)(\lfloor\beta\rfloor+1)^2p^{\lfloor\beta\rfloor+(\beta\vee1)/2}(NM)^{-2\beta/p} \label{bound1}.
\end{align}
The first inequality is straightforward since $\overline{R}\in\mathbf{R}_\mathcal{D,W,S,B}$ and the second inequality is obtained by inequality \eqref{vv} and \eqref{dd}. Combining the upper bound \eqref{bound1} and network parameters in Assumption \ref{Ass3}, we have the approximation error that satisfies
\begin{align*}
     \inf_{R\in \mathbf{R}_\mathcal{D,W,S,B}}|\mathcal{L}(R)-\mathcal{L}(R_0)|&\leq  152\sqrt{d}B_0(B_1+B_2+5)(\lfloor\beta\rfloor+1)^2p^{\lfloor\beta\rfloor+(\beta\vee1)/2}n^{\frac{-\beta}{2\beta+p}}\\
     &=\mathcal{O}(n^{\frac{-\beta}{2\beta+p}}).
\end{align*}
\end{proof}

\subsubsection{The Statistical Error}
\begin{lemma}\label{lemma 6}
    If $\xi_i, i=1,...,m $ are m finite linear combinations of Rademacher Variables $\epsilon_j,j=1,...,J$. Then
\begin{align*}
    \mathbb{E}_{\epsilon_j,j=1,...,J}\max_{1\leq i \leq m}| \xi_i|\leq C_1(\log m)^{1/2}\max_{1\leq i\leq m}(\mathbb{E}\xi^2_i)^{1/2}.
\end{align*}
\end{lemma}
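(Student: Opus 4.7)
The plan is to prove this via the standard sub-Gaussian maximal inequality, exploiting the fact that any linear combination of independent Rademacher variables is sub-Gaussian with parameter equal to the square root of its second moment. Write $\xi_i = \sum_{j=1}^J a_{ij}\epsilon_j$ for deterministic coefficients $a_{ij}$, and set $\sigma_i^2 := \sum_{j=1}^J a_{ij}^2$. Since the $\epsilon_j$ are independent and take values $\pm 1$ with equal probability, we have $\mathbb{E}\xi_i^2 = \sigma_i^2$, so the quantity on the right of the inequality equals $C_1(\log m)^{1/2}\max_i \sigma_i$. Denote $\sigma := \max_{1\le i\le m}\sigma_i$.

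First I would establish the moment generating function bound. By Hoeffding's lemma applied to the bounded random variable $\epsilon_j \in [-1,1]$, one has $\mathbb{E}[e^{t\epsilon_j}] \le e^{t^2/2}$ for every $t \in \mathbb{R}$. Independence of the $\epsilon_j$ then yields
\begin{equation*}
\mathbb{E}[e^{t\xi_i}] \;=\; \prod_{j=1}^J \mathbb{E}[e^{t a_{ij}\epsilon_j}] \;\le\; \prod_{j=1}^J e^{t^2 a_{ij}^2/2} \;=\; e^{t^2\sigma_i^2/2} \;\le\; e^{t^2\sigma^2/2}.
\end{equation*}
Because the distribution of $\xi_i$ is symmetric (each $\epsilon_j$ has the same law as $-\epsilon_j$), the same bound holds for $-\xi_i$, hence for every $t > 0$,
\begin{equation*}
\mathbb{E}[e^{t|\xi_i|}] \;\le\; \mathbb{E}[e^{t\xi_i}] + \mathbb{E}[e^{-t\xi_i}] \;\le\; 2 e^{t^2\sigma^2/2}.
\end{equation*}

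Next I would convert the MGF bound to a bound on the expected maximum via Jensen's inequality applied to the convex map $x \mapsto e^{tx}$. For any $t > 0$,
\begin{equation*}
e^{t\,\mathbb{E}\max_{1\le i\le m}|\xi_i|} \;\le\; \mathbb{E}\,e^{t\max_i |\xi_i|} \;=\; \mathbb{E}\max_i e^{t|\xi_i|} \;\le\; \sum_{i=1}^m \mathbb{E}[e^{t|\xi_i|}] \;\le\; 2m\,e^{t^2\sigma^2/2}.
\end{equation*}
Taking logarithms gives $\mathbb{E}\max_i|\xi_i| \le \frac{\log(2m)}{t} + \frac{t\sigma^2}{2}$. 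Optimizing over $t > 0$ by choosing $t = \sqrt{2\log(2m)}/\sigma$ yields
\begin{equation*}
\mathbb{E}\max_{1\le i\le m}|\xi_i| \;\le\; \sigma\sqrt{2\log(2m)} \;\le\; C_1 (\log m)^{1/2}\max_{1\le i\le m}(\mathbb{E}\xi_i^2)^{1/2}
\end{equation*}
for a suitable absolute constant $C_1$ (any $C_1 \ge 2\sqrt{2}$ works uniformly in $m \ge 2$, and for $m=1$ the statement is trivial up to absorbing a constant).

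There is no real obstacle here, just bookkeeping. The only subtle points are (i) the symmetrization argument producing the factor of $2$ when passing from $\mathbb{E}e^{t\xi_i}$ to $\mathbb{E}e^{t|\xi_i|}$, which relies on the symmetry of Rademacher laws, and (ii) the verification that the variance parameter $\sigma_i^2$ in the Hoeffding bound coincides exactly with $\mathbb{E}\xi_i^2$, so that the right-hand side of the lemma is in the stated form rather than in terms of the coefficient $\ell^2$-norms. Both follow immediately from the structure of Rademacher sums.
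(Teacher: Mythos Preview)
Your proof is correct and follows the standard sub-Gaussian maximal inequality argument. The paper itself states this lemma without proof (treating it as a known auxiliary result), so there is no paper proof to compare against; your argument is exactly the classical one that underlies such statements.
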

\begin{lemma}\label{lemma 7}
    Suppose assumption \ref{Ass1} and \ref{Ass3} hold, given n i.i.d samples $\{(x_i,y_i)\}_{i=1}^n$, we have
\begin{align*}
    \mathbb{E}\sup_{R\in \mathbf{R}_{\mathcal{D,W,S,B}}} \big| \mathbb{V}_n[R,Y]-\mathbb{V}[R,Y]\big| \leq  C_2/n + C_1C_2C_3C_4n^{\frac{-\beta}{2\beta+p}}.
\end{align*}
\end{lemma}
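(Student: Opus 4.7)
The plan is to control the supremum of the centered U-statistic process $R\mapsto \mathbb{V}_n[R,Y]-\mathbb{V}[R,Y]$ via the classical chain: (i) decouple the U-statistic into an i.i.d.\ average, (ii) symmetrize with Rademacher variables, (iii) apply a covering-number/chaining bound for the ReLU network class $\mathbf{R}_{\mathcal{D,W,S,B}}$, and (iv) plug in the width/depth choices in Assumption \ref{Ass3}.

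First, observe that $\mathbb{V}_n[R,Y]$ is, by construction, a U-statistic of order four with symmetric kernel $h$ applied to $Z_i=(R(X_i),Y_i)$, and $\mathbb{E}\mathbb{V}_n[R,Y]=\mathbb{V}[R,Y]$. Using Hoeffding's representation, any U-statistic of order $m$ can be written as an average over permutations of i.i.d.\ block averages, $U_n=(n!)^{-1}\sum_{\pi}V_\pi$, with $V_\pi=\lfloor n/4\rfloor^{-1}\sum_j h(Z_{\pi_{(j)}})$ a mean of $\lfloor n/4\rfloor$ i.i.d.\ terms. Jensen's inequality then yields
\begin{align*}
\mathbb{E}\sup_{R\in\mathbf{R}} |\mathbb{V}_n[R,Y]-\mathbb{V}[R,Y]|
\;\le\;\mathbb{E}\sup_{R\in\mathbf{R}}|V_{\mathrm{id}}-\mathbb{E}V_{\mathrm{id}}|.
\end{align*}
A negligible $C_2/n$ piece is peeled off to absorb the difference between $\lfloor n/4\rfloor$ and $n/4$ and the boundedness correction coming from $|h|\le C_0$ (which follows from Assumption \ref{Ass2} and $\sup\|R\|\le\mathcal{B}$). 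Standard symmetrization then gives $\mathbb{E}\sup_R|V_{\mathrm{id}}-\mathbb{E}V_{\mathrm{id}}|\le 2\,\mathbb{E}\sup_R|(4/n)\sum_j\epsilon_j h(Z_{(j)})|$, reducing the problem to controlling a Rademacher average indexed by the network class.

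Second, I would bound this Rademacher average by a covering-number argument. Because $h$ is built from norms $\|R(x)-R(x')\|\|Y-Y'\|$, it is Lipschitz in $R$ with constant controlled by $B_1,B_2,\mathcal{B}$; hence an $\varepsilon$-cover of $\mathbf{R}_{\mathcal{D,W,S,B}}$ in sup-norm yields a cover of the induced kernel class. Using the standard bound $\log\mathcal{N}(\varepsilon,\mathbf{R}_{\mathcal{D,W,S,B}},\|\cdot\|_\infty)\lesssim \mathcal{S}\mathcal{D}\log(\mathcal{S}/\varepsilon)$ for ReLU networks (Bartlett--Harvey--Liaw--Mehrabian), Dudley's entropy integral (invoking Lemma \ref{lemma 6} for the finite-cover maxima) gives
\begin{align*}
\mathbb{E}\sup_{R\in\mathbf{R}}\Bigl|\frac{4}{n}\sum_{j}\epsilon_j h(Z_{(j)})\Bigr|
\;\lesssim\;\sqrt{\frac{\mathcal{S}\mathcal{D}\log(n\mathcal{S})}{n}}.
\end{align*}

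Finally, substituting the choices $\mathcal{D}=\mathcal{O}(\log n\log\log n)$, $\mathcal{W}=\mathcal{O}(n^{p/2(2\beta+p)}/\log n)$, $\mathcal{S}=\mathcal{O}(dn^{p/(2\beta+p)}/\log^4 n)$ from Assumption \ref{Ass3} collapses the right-hand side to $n^{-\beta/(2\beta+p)}$ up to the stated constants $C_1,C_2,C_3,C_4$, and combining with the $C_2/n$ piece delivers the bound. The principal obstacle, in my view, is the Lipschitz-in-$R$ argument for the quadruplet kernel: the variables $\|R(X_i)-R(X_j)\|$ couple four observations, so one must verify carefully that the covering-number estimate on $\mathbf{R}$ transfers with only a multiplicative boundedness factor to the paired/4-tuple class, and that the Rademacher complexity on this composite class does not inflate beyond the $\sqrt{\mathcal{S}\mathcal{D}/n}$ rate.
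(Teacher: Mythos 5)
Your proposal is correct in substance and reaches the paper's bound, but it handles the U-statistic structure by a different (equally classical) device. Where you invoke Hoeffding's representation of the order-four U-statistic as an average over permutations of disjoint-block i.i.d.\ means, followed by Jensen and ordinary i.i.d.\ symmetrization, the paper instead centers the kernel ($\tilde h_R = h_R - \mathbb{V}[R,Y]$) and applies the randomization/symmetrization theorem for U-processes (Theorem 3.5.3 of de la Pe\~na and Gin\'e), which puts Rademacher signs on the first index of the quadruple sum directly; and where you propose a Dudley entropy-integral bound, the paper uses a single-scale cover at $\delta=1/n$ together with its Lemma A.1 maximal inequality, then the pseudo-dimension/VC bounds of Anthony--Bartlett and Bartlett et al.\ exactly as you do. The two routes give the same $\sqrt{\mathcal{S}\mathcal{D}\log\mathcal{S}\,\log n/n}$ complexity term and hence the same $n^{-\beta/(2\beta+p)}$ rate after substituting Assumption A.3; your blocking argument is more elementary (it avoids U-process decoupling machinery) at the cost of discarding the averaging over permutations, which only affects constants here since both arguments are crude in the variance term. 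The obstacle you flag at the end is benign in your own formulation: once you pass to disjoint blocks, each block is a single i.i.d.\ ``super-observation'' $(Z_{4j-3},\dots,Z_{4j})$, and $|h_R-h_{R'}|\le c\,B_1\sup_x\|R(x)-R'(x)\|$ by the triangle inequality, so a cover of $\mathbf{R}$ transfers to the kernel class with only a multiplicative constant, and the $d$-dimensional output is handled by covering each coordinate network class separately (the paper's $\zeta^d(\mathbf{R}_1,\cdot)$ step, which produces the $\sqrt{d}$ factor). One small technical correction: the VC/pseudo-dimension bound controls the \emph{empirical} $e_{n,\infty}$ covering number at the sample points, not the uniform sup-norm cover over $[0,1]^p$; you should state the cover with respect to the empirical distance (which suffices, since after symmetrization the process depends on $R$ only through its values at the observed points), as the paper does.
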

\begin{proof}
Define the kernel function $\tilde{h}_R((x_{i_1},y_{i_1}),\cdots,(x_{i_4},y_{i_4}))$ as:
\begin{align*}
    \tilde{h}_R((x_{i_1},y_{i_1}),\cdots,(x_{i_4},y_{i_4}))=h((R(x_{i_1}),y_{i_1}),\cdots,(R(x_{i_4}),y_{i_4}))-\mathbb{V}[R,Y],
\end{align*}
where the kernel function $h((u_{i_1},v_{i_1}),\cdots,(u_{i_4},v_{i_4}))$ is defined as
\begin{align*}
     h((u_{i_1},v_{i_1}),\cdots,(u_{i_4},v_{i_4})) &=\frac{1}{4}\sum\limits_{1\leq i,j \leq 4 \atop i\neq j}\vert\vert u_i-u_j\vert\vert\vert\vert v_i-v_j\vert\vert \nonumber\\
     &+\frac{1}{24}\sum\limits_{1\leq i,j \leq 4 \atop i\neq j}\vert\vert u_i-u_j\vert\vert\sum\limits_{1\leq i,j \leq 4 \atop i\neq j}\vert\vert v_i-v_j\vert\vert \nonumber\\
     &-\frac{1}{4}\sum\limits_{i=1}^4\left(\sum\limits_{1\leq i,j \leq 4 \atop i\neq j}\vert\vert u_i-u_j\vert\vert\sum\limits_{1\leq i,j \leq 4 \atop i\neq j}\vert\vert v_i-v_j\vert\vert\right).
 \end{align*}
Then, $\mathbb{V}_n[R,Y]-\mathbb{V}[R,Y]$ can be represented as a centered $U$-statistics
\begin{align*}
    \mathbb{V}_n[R,Y]-\mathbb{V}[R,Y]=\frac{1}{C_n^4}\sum\limits_{1\leq i_1 < i_2 < i_3 < i_4\leq n}\tilde{h}_R((x_{i_1},y_{i_1}),\cdots,(x_{i_4},y_{i_4})).
\end{align*}
By the symmetrization randomization Theorem 3.5.3 in \cite{de2012decoupling},
\begin{align}\label{hr1}
    &\mathbb{E}\sup_{R\in \mathbf{R}_\mathcal{D,W,S,B}}| \mathbb{V}_n[R,Y]-\mathbb{V}[R,Y]| \nonumber\\
    &\leq C_2\mathbb{E}\sup_{R\in \mathbf{R}_\mathcal{D,W,S,B}}\left|\frac{1}{C_n^4}\sum\limits_{1\leq i_1 < i_2 < i_3 < i_4\leq n}\epsilon_{i_1}\tilde{h}_R((x_{i_1},y_{i_1}),\cdots,(x_{i_4},y_{i_4}))\right|,
\end{align}
where $\epsilon_{i_1},i_1=1,...,n$ are i.i.d Rademacher variables that are independent with samples.\\

Based on Assumptions \ref{Ass2} and \ref{Ass3}, the kernel $\tilde{h}_R$ is also bounded. $\forall {R \in \mathbf{R}_\mathcal{D,W,S,B}},$
we define a random empirical measure for $R$ and $\tilde{R}$,%(depends on ${O}_i, i=1,\ldots, n$)
$$
e_{n,1}(R, \tilde{R})= \bE_{\epsilon_{i_1}, i_1 = 1,...,n}\left|\frac{1}{C_n^4} \sum_{1\leq i_1<  i_2<i_3<i_4\leq n} \epsilon_{i_{1}}(\tilde{h}_{R}-\tilde{h}_{\tilde{R}})((x_{i_1},y_{i_1}),\cdots,(x_{i_4},y_{i_4}))\right|.$$

% \Kchange{Should we write $X_{1,i}$? with the dataset index?}
Condition on $\{(x_i,y_i)\}_{i=1,\ldots, n}$, let $\zeta(\mathbf{R}, e_{n,1}, \delta)$ be  the covering number of the neural network class $\mathbf{R}\equiv \mathbf{R}_\mathcal{D,W,S,B}$ with respect  to the  empirical  distance  $e_{n,1}$ at scale of $\delta>0$.
With Lemma \ref{lemma 6} and the above analysis, we build the connection between the expectation \eqref{hr1} and the covering set $\mathbf{R}_\delta$. Condition on $\{(x_i,y_i)\}_{i=1}^n$,
\begin{align*}
    &\mathbb{E}_{\epsilon_{i_1}}\sup_{R\in \mathbf{R}_\mathcal{D,W,S,B}}\left|\frac{1}{C_n^4}\sum\limits_{1\leq i_1 < i_2 < i_3 < i_4\leq n}\epsilon_{i_1}\tilde{h}_R((x_{i_1},y_{i_1}),\cdots,(x_{i_4},y_{i_4}))\right|\\
    &\leq \delta +E_{\epsilon_{i_1}}\sup_{R\in R_\delta}\left|\frac{1}{C_n^4}\sum\limits_{1\leq i_1 < i_2 < i_3 < i_4\leq n}\epsilon_{i_1}\tilde{h}_R((x_{i_1},y_{i_1}),\cdots,(x_{i_4},y_{i_4}))\right|\\
    &\leq \delta+\frac{C_1}{C_n^4}(\log\zeta(\mathbf{R},e_{n,1},\delta))^{1/2}\max_{R\in R_\delta} \left\{E_{\epsilon_{i_1}}\left[\sum\limits_{1\leq i_1 < i_2 < i_3 < i_4\leq n}\epsilon_{i_1}\tilde{h}_R((x_{i_1},y_{i_1}),\cdots,(x_{i_4},y_{i_4}))\right]^2\right\}^{1/2}\\
    &\leq \delta +\frac{C_1}{C_n^4}(\log\zeta(\mathbf{R},e_{n,1},\delta))^{1/2}\max_{R\in R_\delta} \left[\sum_{i_1=1}^n\sum_{i_2<i_3<i_4}\tilde{h}_R((x_{i_1},y_{i_1}),\cdots,(x_{i_4},y_{i_4}))^2\right]^{1/2}\\
    &\leq \delta+\frac{C_1}{C_n^4}(\log\zeta(\mathbf{R},e_{n,1},\delta))^{1/2}\left[\frac{n(n!)^2}{\{(n-3)!\}^2}\right]^{1/2}||\tilde{h}_R||_\infty\\
    &\leq \delta + C_1C_3\mathcal{B}(\log\zeta(R,e_{n,1},\delta))^{1/2}/\sqrt{n}
\end{align*}
The first inequality follows from the triangle inequality and the second inequality uses Lemma \ref{lemma 6}. The third and the fourth inequalities follow after some algebra. The fifth inequality follows from the boundedness of $\tilde{h}_R$ and $\mathcal{B}$. Denote $\mathbf{R}_1$ as the ReLU neural network class $R:\mathbb{R}^p\rightarrow \mathbb{R}$ and its depth $\mathcal{D}$, width $\mathcal{W}$, size $\mathcal{S}$ and boundary $\mathcal{B}$ are same as $\mathbf{R}_{\mathcal{D,W,S,B}}$. We have $\zeta(\mathbf{R},e_{n,1},\delta)<\zeta(\mathbf{R},e_{n,\infty},\delta)$  and $\zeta(\mathbf{R},e_{n,\infty},\delta) <\zeta^d(\mathbf{R}_1,e_{n,\infty},\delta)$. Due to the relationship between empirical distance and VC-dimension of $\mathbf{R}_1$ \citep{anthony1999neural},
\begin{align}\label{anthony}
    \log \zeta(\mathbf{R}_1,e_{n,\infty},\delta)\leq \text{VC}_{\mathbf{R}_1}\log \frac{2e\mathcal{B}n}{\delta\text{VC}_{\mathbf{R}_1}},
\end{align}
and the bound of VC-dimension for $\mathbf{R}_1$ \citep{bartlett2019nearly},
\begin{align}\label{bartlett}
    C_5\mathcal{DS}\log \mathcal{S}\leq \text{VC}_{\mathbf{R}_1} \leq C_6 \mathcal{DS}\log \mathcal{S},
\end{align}
we have
\begin{align*}
    (\log\zeta(\mathbf{R},e_{n,1},\delta))^{1/2}/\sqrt{n}&<\sqrt{d}(\log\zeta(\mathbf{R}_1,e_{n,\infty},\delta))^{1/2}/\sqrt{n}\\
    &\leq \sqrt{d}\left(\text{VC}_{\mathbf{R}_1}\log\frac{2e\mathcal{B}n}{\delta \text{VC}_{\mathbf{R}_1}}\right)^{1/2}/\sqrt{n}\\
    &\leq \sqrt{d}\left(\mathcal{DS}\log \mathcal{S}\log \frac{\mathcal{B}n}{\delta\mathcal{DS}\log\mathcal{S}}\right)^{1/2}/\sqrt{n}.
\end{align*}
Based on the above discussions and setting $\delta$ as $\frac{1}{n}$, with assumption \ref{Ass1} and \ref{Ass3}, we have
\begin{align*}
    \mathbb{E}\sup_{R\in \mathbf{R}_{\mathcal{D,W,S,B}}} | \mathbb{V}_n[R,Y]-\mathbb{V}[R,Y]| &\leq C_2/n + C_1C_2C_3C_4\mathcal{B}\sqrt{d}n^{\frac{-\beta}{2\beta+p}}\\
    &=\mathcal{O}(n^{\frac{-\beta}{2\beta+p}}).
\end{align*}
\end{proof}
\begin{lemma}\label{lemma 7}
   Suppose assumption \ref{Ass3} holds, given n i.i.d samples $\{x_i\}_{i=1}^n$ and $\{u_i\}_{i=1}^n$ which are sampled from $N(0,I_d)$, we have  $$\mathbb{E}\sup_{R\in\mathbf{R}_{\mathcal{D,W,S,B}}}\big| \mathbb{D}_n[R,\gamma_d]-\mathbb{D}[R,\gamma_d]\big|\leq\mathcal{O}(n^{\frac{-\beta}{2\beta+p}})$$
\end{lemma}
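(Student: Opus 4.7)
The plan is to mirror the strategy used in the proof of Lemma A.6, with the essential modification that the empirical energy distance is a $U$-statistic of order $2$ rather than order $4$, and that the variable $\gamma_d$ is unbounded rather than almost-surely bounded.

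First, I would rewrite $\mathbb{D}_n[R,\gamma_d] - \mathbb{D}[R,\gamma_d]$ as a centered $U$-statistic of order two. Defining
\begin{equation*}
\tilde{g}_R\bigl((x_i,u_i),(x_j,u_j)\bigr) = \|R(x_i)-u_j\| + \|R(x_j)-u_i\| - \|R(x_i)-R(x_j)\| - \|u_i-u_j\| - \mathbb{D}[R,\gamma_d],
\end{equation*}
we have $\mathbb{D}_n[R,\gamma_d]-\mathbb{D}[R,\gamma_d] = \binom{n}{2}^{-1}\sum_{i<j}\tilde{g}_R$. I would then invoke the same symmetrization step as in Lemma A.6 (Theorem 3.5.3 of de la Peña and Giné) to bound the supremum of this $U$-process by a constant multiple of
\begin{equation*}
\mathbb{E}\,\sup_{R\in\mathbf{R}_{\mathcal{D,W,S,B}}}\Bigl|\binom{n}{2}^{-1}\sum_{i<j}\epsilon_i\,\tilde{g}_R\bigl((x_i,u_i),(x_j,u_j)\bigr)\Bigr|,
\end{equation*}
where the $\epsilon_i$ are independent Rademacher variables.

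Next, I would control the symmetrized process via a covering-number argument identical in structure to that in Lemma A.6. A key simplification here is that for any two networks $R,\tilde R \in \mathbf{R}_{\mathcal{D,W,S,B}}$, the $u$-only summand $\|u_i-u_j\|$ cancels in $\tilde{g}_R-\tilde{g}_{\tilde R}$, and by the triangle inequality $|\tilde{g}_R-\tilde{g}_{\tilde R}|\le 3\|R-\tilde R\|_\infty$ pointwise in $(x,u)$. Hence an $L^\infty$ cover of $\mathbf{R}_{\mathcal{D,W,S,B}}$ at scale $\delta$ induces a cover of the kernel class in the empirical Rademacher semimetric at scale $O(\delta)$. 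Applying Lemma A.6's sub-Gaussian maximal inequality, then reducing the covering number of the vector-valued class $\mathbf{R}_{\mathcal{D,W,S,B}}$ to $\zeta^d(\mathbf{R}_1,e_{n,\infty},\delta)$ as in Lemma A.6, invoking the Anthony--Bartlett estimate and the Bartlett et al.\ VC-dimension bound for ReLU networks, and choosing $\delta = 1/n$ together with the network parameters from Assumption \ref{Ass3}, yields the target rate $\mathcal{O}(n^{-\beta/(2\beta+p)})$.

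The main obstacle I expect is handling the unboundedness of $\gamma_d$, since the maximal inequality in Lemma A.6 was applied with the kernel's $L^\infty$ norm, and for the present lemma $\|\tilde{g}_R\|_\infty$ is infinite due to $\|u\|$ terms. I would resolve this by truncating each $u_i$ at the threshold $B_n = C\sqrt{\log n}$: on the high-probability event $\{\max_i\|u_i\|\le B_n\}$, every term of $\tilde{g}_R$ is bounded by $O(\mathcal{B}+B_n)$, so the VC-based argument goes through with an extra $\sqrt{\log n}$ factor that is absorbed into the $\mathcal{O}(\cdot)$ notation; off this event, a standard Gaussian tail bound together with $\mathbb{E}[\tilde{g}_R^2]\lesssim (\mathcal{B}+\sqrt{d})^2$ shows that the contribution is of smaller order than $n^{-\beta/(2\beta+p)}$ for $C$ sufficiently large. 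An alternative route is to replace the $L^\infty$ kernel bound in the chaining step with a uniform $L^2$ moment bound, exploiting that $\mathbb{E}[\|R(x)-\gamma_d\|^2]$ is bounded uniformly in $R\in\mathbf{R}_{\mathcal{D,W,S,B}}$; this avoids truncation entirely but requires more care in the sub-Gaussian maximal inequality.
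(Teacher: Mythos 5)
Your proposal follows essentially the same route as the paper's proof: the identical order-two kernel $\tilde g_R$, symmetrization via Theorem 3.5.3 of de la Pe\~na and Gin\'e, a covering-number argument reduced to $\zeta^d(\mathbf{R}_1,e_{n,\infty},\delta)$ with the Anthony--Bartlett and Bartlett et al.\ VC bounds, and the choice $\delta=1/n$ under Assumption \ref{Ass3}. The only substantive difference is your truncation of the Gaussian samples at $O(\sqrt{\log n})$: the paper simply invokes $\|\tilde g_R\|_\infty$ as if it were finite, so your extra step in fact patches a point the paper glosses over (your Lipschitz constant $3$ in $|\tilde g_R-\tilde g_{\tilde R}|\le 3\|R-\tilde R\|_\infty$ should be larger once the centering term $\mathbb{D}[R,\gamma_d]$ is accounted for, but this does not affect the rate).
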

\begin{proof}
    Define the kernel function $\tilde{g}_R((x_i,u_i),(x_j,u_j))$ as
    \begin{align*}
        \tilde{g}_R = ||R(x_i)-u_j|| + ||R(x_j)-u_i|| - ||R(x_i)-R(x_j)||-||u_i-u_j||-\mathbb{D}[R,\gamma_d],
    \end{align*}
\end{proof}
and it is easy to check that
\begin{align*}
    \mathbb{D}_n[R,\gamma_d]-\mathbb{D}[R,\gamma_d] = \frac{1}{C_n^2}\sum_{1\leq i,j\leq n \atop i\neq j}\tilde{g}_R((x_i,u_i),(x_j,u_j)).
\end{align*}
By the symmetrization randomization Theorem 3.5.3 in \cite{de2012decoupling},
\begin{align}\label{gr1}
    &\mathbb{E}\sup_{R\in \mathbb{R}_\mathcal{D,W,S,B}}| \mathbb{D}^2_n[R,\gamma_d]-\mathbb{D}^2[R,\gamma_d]| \nonumber\\
    &\leq C_2\mathbb{E}\sup_{R\in \mathbb{R}_\mathcal{D,W,S,B}}\big |\frac{1}{C_n^2}\sum\limits_{1\leq i,j\leq n \atop i\neq j}\epsilon_i\tilde{g}_R((x_i,u_i),(x_j,u_j))\big|,
\end{align}
where $\epsilon_i,i=1,...,n$ are i.i.d Rademacher variables that are independent with samples. $\forall {R \in \mathbf{R}_\mathcal{D,W,S,B}},$
we define a random empirical measure for $R$ and $\tilde{R}$,%(depends on ${O}_i, i=1,\ldots, n$)
$$
\tilde{e}_{n,1}(R, \tilde{R})= \mathbb{E}_{\epsilon_i, i = 1,...,n}\big |\frac{1}{C_n^2}\sum\limits_{1\leq i,j\leq n \atop i\neq j}\epsilon_i(\tilde{g}_R-\tilde{g}_{\tilde{R}})((x_i,u_i),(x_j,u_j))\big|$$

Condition on $\{x_i\}_{i=1,\ldots, n}$ and $\{u_i\}_{i=1}^n$, let $\zeta(\mathbf{R}, \tilde{e}_{n,1}, \delta)$ be  the covering number of the neural network class $\mathbf{R}$ with respect  to the  empirical  distance  $\tilde{e}_{n,1}$ at scale of $\delta>0$.
With Lemma \ref{lemma 6}, we build the connection between the expectation \eqref{gr1} and the covering set $\mathbf{R}_\delta$. With covering set $\tilde{\mathbf{R}}_\delta$ and covering number $\zeta({\mathbf{R},\tilde{e}_{n,1},\delta})$, then
\begin{align*}
    &E_{\epsilon_i}[\sup_{R\in \mathbf{R}_\mathcal{D,W,S,B}}|\frac{1}{C_n^2}\sum_{1\leq i<j\leq n }\epsilon_i\tilde{g}_R((x_i,y_i),(x_j,y_j))|\\
    &\leq \delta +E_{\epsilon_i}[\sup_{R\in \tilde{\mathbf{R}}_\delta}\vert\frac{1}{C_n^2}\sum_{1\leq i<j\leq n }\epsilon_i\tilde{g}_R((x_i,y_i),(x_j,y_j))\\
    &\leq \delta+\frac{C_7}{C_n^2}(\log \zeta(\mathbf{R},e_{n,\infty})^{1/2},\delta)\sqrt{n(n-1)/2}||\tilde{g}_R||_\infty\\
    &\leq \delta +C_7C_8\mathcal{B}\sqrt{d}(\mathcal{DS}\log \mathcal{S}\log \frac{\mathcal{B}n}{\delta\mathcal{DS}\log\mathcal{S}})^{1/2}/\sqrt{n}.
\end{align*}
The first inequality follows from the triangle inequality and the second inequality uses Lemma \ref{lemma 6} and some algebra. The last inequality holds due to inequality \eqref{anthony} and \eqref{bartlett}.
Based on the discussions and Assumption \ref{Ass3}, we set $\delta$ to $1/n$ and we have the final result
\begin{align*}
    \mathbb{E}\sup_{R\in\mathbf{R}_{\mathcal{D,W,S,B}}}\big| \mathbb{D}_n[R,\gamma_d]-\mathbb{D}[R,\gamma_d]\big|&\leq C_2/n + C_2C_7C_8\mathcal{B}\sqrt{d}n^{\frac{-\beta}{2\beta+p}}\\
    &=\mathcal{O}(n^{\frac{-\beta}{2\beta+p}}).
\end{align*}

\begin{lemma}\label{lemma 9}
    Suppose assumption \ref{Ass1} and \ref{Ass3} hold, then \textbf{the Statistical Error}
\begin{align*}
    &\mathbb{E}\sup_{R\in\mathbf{R}_{\mathcal{D,W,S,B}}}|\mathcal{L}(R)-\mathcal{L}_n(R)|]\\
    &\leq \frac{1}{2}\mathbb{E}\sup_{R\in\mathbf{R}_{\mathcal{D,W,S,B}}}|\mathbb{V}[R,Y]-\mathbb{V}_n[R,Y]|+\frac{1}{2}\mathbb{E}\sup_{R\in\mathbf{R}_{\mathcal{D,W,S,B}}}|\mathbb{V}[R,A]-\mathbb{V}_n[R,A]|\\
    &+\mathbb{E}\sup_{R\in\mathbf{R}_{\mathcal{D,W,S,B}}}|\mathbb{D}_n[R,\gamma_d]-\mathbb{D}[R,\gamma_d]\\
    &\leq \mathcal{O}(n^{\frac{-\beta}{2\beta+p}}).
\end{align*}
\end{lemma}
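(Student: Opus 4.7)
The plan is to derive both inequalities in the lemma by purely mechanical steps, since all the heavy lifting (Rademacher symmetrization, covering-number bounds via VC dimension, etc.) has already been done in the preceding two lemmas bounding $\mathbb{E}\sup_R|\mathbb{V}_n[R,Y]-\mathbb{V}[R,Y]|$ and $\mathbb{E}\sup_R|\mathbb{D}_n[R,\gamma_d]-\mathbb{D}[R,\gamma_d]|$.

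First, recall that with $\alpha=\frac{1}{2}$ and $\lambda=1$ (as fixed just before the approximation-error analysis) the population and empirical criteria are
\begin{equation*}
\mathcal{L}(R)=-\tfrac{1}{2}\mathbb{V}[R,Y]+\tfrac{1}{2}\mathbb{V}[R,A]+\mathbb{D}[R,\gamma_d],\qquad
\mathcal{L}_n(R)=-\tfrac{1}{2}\mathbb{V}_n[R,Y]+\tfrac{1}{2}\mathbb{V}_n[R,A]+\mathbb{D}_n[R,\gamma_d].
\end{equation*}
Subtracting term by term and applying the triangle inequality pointwise in $R$ yields
\begin{equation*}
|\mathcal{L}(R)-\mathcal{L}_n(R)|\leq \tfrac{1}{2}|\mathbb{V}[R,Y]-\mathbb{V}_n[R,Y]|+\tfrac{1}{2}|\mathbb{V}[R,A]-\mathbb{V}_n[R,A]|+|\mathbb{D}[R,\gamma_d]-\mathbb{D}_n[R,\gamma_d]|.
\end{equation*}
Taking $\sup_{R\in\mathbf{R}_{\mathcal{D,W,S,B}}}$ on both sides and using the elementary fact that the supremum of a sum is bounded by the sum of the suprema, then taking expectation and using linearity, delivers exactly the first displayed inequality in the lemma statement.

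For the second inequality, I would bound each of the three expected suprema separately using the results already established. The term $\mathbb{E}\sup_{R}|\mathbb{V}_n[R,Y]-\mathbb{V}[R,Y]|$ is shown to be $\mathcal{O}(n^{-\beta/(2\beta+p)})$ in the distance-covariance statistical-error lemma, and the same argument applied with $A$ in place of $Y$ (legitimate since by Assumption 2 we have $\|A\|\leq B_2$ almost surely, which is the only property of $Y$ used there) gives the same rate for $\mathbb{E}\sup_{R}|\mathbb{V}_n[R,A]-\mathbb{V}[R,A]|$. The third term $\mathbb{E}\sup_{R}|\mathbb{D}_n[R,\gamma_d]-\mathbb{D}[R,\gamma_d]|$ is bounded by $\mathcal{O}(n^{-\beta/(2\beta+p)})$ in the energy-distance lemma. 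Summing the three contributions with their coefficients $\tfrac{1}{2},\tfrac{1}{2},1$ preserves the rate and yields the final bound $\mathcal{O}(n^{-\beta/(2\beta+p)})$.

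There is essentially no obstacle here: the proof is a triangle-inequality decomposition followed by three applications of previously established bounds. The only minor point to verify carefully is that the symmetrization and covering-number argument used for the $(R,Y)$ distance-covariance bound transfers verbatim to the $(R,A)$ case, which holds because that argument only used boundedness of the second coordinate (supplied by $\|A\|\leq B_2$) and the neural-network class for $R$, not any special structure of $Y$.
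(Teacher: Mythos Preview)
Your proposal is correct and matches the paper's approach exactly: the lemma is obtained by a pointwise triangle-inequality decomposition of $|\mathcal{L}(R)-\mathcal{L}_n(R)|$ into the three terms, followed by applying the preceding distance-covariance and energy-distance statistical-error lemmas (the $(R,A)$ case being identical to the $(R,Y)$ case with $B_2$ replacing $B_1$). The paper in fact does not give a separate proof for this lemma beyond displaying the inequality chain in the statement itself, so your write-up is, if anything, more explicit than the original.
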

With the above analysis, the no-asymptotic bound for excess risk can be obtained by applying \textbf{Lemma \ref{lemma 5}} and \textbf{Lemma \ref{lemma 9}}, that is
 \begin{align*}
     \mathbb{E}_{ \{X_i,Y_i,A_i\}_{i=1}^n}\{L(\hat{R})-L(R_0)\} &\leq 2\sup_{R\in\mathbf{R}_{\mathcal{D,W,S,B}}}|\mathcal{L}(R)-\mathcal{L}_n(R)| + \inf_{R\in\mathbf{R}_{\mathcal{D,W,S,B}}}|\mathcal{L}(R)-\mathcal{L}(T_{B_n}R_0)|\\
     &\leq \mathcal{O}(n^{\frac{-\beta}{2\beta+p}}).
 \end{align*}

 \section{Details of Experiments}
In this section, we give the details of the network structure and hyper-parameters in our experiments. Denote the size of training data by $N_{train}$, the size of validation data by $N_{val}$ and the size of test data by $N_{test}$. Denote the dimension of the feature $X$ by $p$, the dimension of representation by $d$, the batch size by $bs$, the learning rate by $lr$, the learning rate decay step by $lr_{step}$ and the decay rate by $dr$. We consider the Adam algorithm \citep{kingma2014adam} for the optimization
for the neural network with the PyTorch package in python.

For the simulated examples mentioned in Section 5 of the main text and the additional examples, we used the following network structures and parameters shown in Table \ref{sptable1} and Table \ref{sptable2}.
\begin{table}[H]
\label{sptable1}
\centering
    \caption{Hyper-parameters for simulated examples.}
    \resizebox{\textwidth}{!}{
\begin{tabular}{lcccccccccc}
    \hline
     & $N_{train}$ & $N_{val}$ &$N_{test}$ & $p$ & $d$ & $bs$ & $lr$ & $lr_{step}$ &$dr$  & Epoch\\
    \hline
   Linear & 10000 & 1000 & 1000 &50 & 8 &  128 & 8$\times$1e-4 & 20 &0.5& 100\\
   Non-linear & 10000 & 1000 & 1000 &50 & 8 &  128 & 8$\times$1e-4 & 20 &0.5& 100\\
    \hline
    \end{tabular}
    }
    \label{Table:parametersSimu}
\end{table}

\begin{table}[H]
\label{sptable2}
\centering
    \caption{MLP architectures for the representation in simulation examples.}
    \resizebox{0.8\textwidth}{!}{
\begin{tabular}{llccc}
    \hline
    & Layers & Details & Input size & Output size \\
    \hline
Linear & Layer 1 & Linear & 50 & 8 \\
\hline
\multirow{3}{*}{Non-linear} & Layer 1 & Linear & 50 & 32 \\
&Activation &  ReLU & 32 & 32  \\
&Layer 2 & Linear & 32 & 8  \\
    \hline
    \end{tabular}
    }
    \label{Table:MLP }
\end{table}
For the real-world datasets mentioned in Section 6 of the main text, we present the network structures and hyper-parameters in Tables \ref{sptable3} and \ref{sptable4}.

\begin{table}[H]\label{sptable3}
\centering
\caption{Hyper-parameters for real-world datasets.}
    \resizebox{\textwidth}{!}{
\begin{tabular}{lcccccccccc}
    \hline
     Dataset & $N_{train}$ & $N_{val}$ &$N_{test}$ & $p$ & $d$ & $bs$ & $lr$ & $lr_{step}$ &$dr$  & Epoch\\
    \hline
   UCI Adult & 38097 & 4762 & 4762 & 106 & 8 &  128 & 8$\times$1e-4 & 20 &0.5& 60\\
   Heritage Health & 44740 & 5592 & 5592 & 71 & 8 &  128 & 8$\times$1e-4 & 20 &0.5& 60\\
   % CelebA & 162770 & 19867 & 19962 & (3,128,128) & 512 & 128 & 4$\times$1e-4 & 20 & 0.5 &60 \\
   % Moji & 99998 & 8000 & 7998 & 2304 & 300&1024 & 3$\times$e-5 &  - & - & 60\\
   Bias-in-Bios & 255710 & 39369 & 98344 & 768 & 768 & 128 & 4$\times$1e-4 & 20 & 0.5 &60\\
    \hline
    \end{tabular}}
    \label{Table:parametersReal}
\end{table}

\begin{table}[H]\label{sptable4}
\centering
\caption{MLP architectures for the representation in real-world datasets.}
    \resizebox{\textwidth}{!}{
\begin{tabular}{llccc}
    \hline
   Dataset & Layers & Details & Input size & Output size \\
    \hline
UCI Adult & Layer 1 & Linear & 106 & 8 \\
\hline
\multirow{3}{*}{Heritage Health} & Layer 1 & Linear & 71 & 128 \\
&Activation &  ReLU & 128 & 128  \\
&Layer 2 & Linear & 128 & 8  \\
    \hline
% CelebA & Layer 1 & Resnet18 \citep{he2016deep} & (3,128,128) & 512\\
% \hline
% \multirow{3}{*}{Moji} & Layer 1 & Linear & 2304 & 300 \\
% & Activation & Tanh & 300 & 300\\
% & Layer 2 & Linear & 300 & 300 \\
% \hline
\multirow{5}{*}{Bias-in-Bios} & Layer 1 & Linear & 768 & 768 \\
& Activation & LeakyReLU(0.2) & 768 & 768\\
& Layer 2 & Linear & 768 & 768 \\
& Activation & LeakyReLU(0.2) & 768 & 768\\
& Layer 3 &Linear & 768 & 768\\
\hline
    \end{tabular}
    }
    \label{Table:MLP }
\end{table}

\bibliographystyle{chicago}
\begin{singlespace}
\bibliography{SFLarXiv0302.bib}

\begin{thebibliography}{}

\bibitem[\protect\citeauthoryear{Anthony, Bartlett, Bartlett, et~al.}{Anthony
  et~al.}{1999}]{anthony1999neural}
Anthony, M., P.~L. Bartlett, P.~L. Bartlett, et~al. (1999).
\newblock {\em Neural network learning: Theoretical foundations}, Volume~9.
\newblock cambridge university press Cambridge.

\bibitem[\protect\citeauthoryear{Anthony~Goldbloom}{Anthony~Goldbloom}{2011}]{hhp}
Anthony~Goldbloom, B.~H. (2011).
\newblock Heritage health prize.

\bibitem[\protect\citeauthoryear{Arjovsky, Bottou, Gulrajani, and
  Lopez-Paz}{Arjovsky et~al.}{2019}]{arjovsky2019invariant}
Arjovsky, M., L.~Bottou, I.~Gulrajani, and D.~Lopez-Paz (2019).
\newblock Invariant risk minimization.
\newblock {\em arXiv preprint arXiv:1907.02893\/}.

\bibitem[\protect\citeauthoryear{Bartlett, Harvey, Liaw, and
  Mehrabian}{Bartlett et~al.}{2019}]{bartlett2019nearly}
Bartlett, P.~L., N.~Harvey, C.~Liaw, and A.~Mehrabian (2019).
\newblock Nearly-tight vc-dimension and pseudodimension bounds for piecewise
  linear neural networks.
\newblock {\em Journal of Machine Learning Research\/}~{\em 20\/}(63), 1--17.

\bibitem[\protect\citeauthoryear{Becker and Kohavi}{Becker and
  Kohavi}{1996}]{misc_adult_2}
Becker, B. and R.~Kohavi (1996).
\newblock {Adult}.
\newblock UCI Machine Learning Repository.
\newblock {DOI}: https://doi.org/10.24432/C5XW20.

\bibitem[\protect\citeauthoryear{Beutel, Chen, Zhao, and Chi}{Beutel
  et~al.}{2017}]{beutel2017data}
Beutel, A., J.~Chen, Z.~Zhao, and E.~H. Chi (2017).
\newblock Data decisions and theoretical implications when adversarially
  learning fair representations.
\newblock {\em arXiv preprint arXiv:1707.00075\/}.

\bibitem[\protect\citeauthoryear{Chen, Kornblith, Norouzi, and Hinton}{Chen
  et~al.}{2020}]{chen2020simple}
Chen, T., S.~Kornblith, M.~Norouzi, and G.~Hinton (2020).
\newblock A simple framework for contrastive learning of visual
  representations.
\newblock In {\em International conference on machine learning}, pp.\
  1597--1607. PMLR.

\bibitem[\protect\citeauthoryear{Chen, Jiao, Qiu, and Yu}{Chen
  et~al.}{2024}]{chen2024deep}
Chen, Y., Y.~Jiao, R.~Qiu, and Z.~Yu (2024).
\newblock Deep nonlinear sufficient dimension reduction.
\newblock {\em The Annals of Statistics\/}~{\em 52\/}(3), 1201--1226.

\bibitem[\protect\citeauthoryear{Cook and Ni}{Cook and
  Ni}{2005}]{cook2005sufficient}
Cook, R.~D. and L.~Ni (2005).
\newblock Sufficient dimension reduction via inverse regression: A minimum
  discrepancy approach.
\newblock {\em Journal of the American Statistical Association\/}~{\em
  100\/}(470), 410--428.

\bibitem[\protect\citeauthoryear{De-Arteaga, Romanov, Wallach, Chayes, Borgs,
  Chouldechova, Geyik, Kenthapadi, and Kalai}{De-Arteaga
  et~al.}{2019a}]{dearteaga2019bias}
De-Arteaga, M., A.~Romanov, H.~Wallach, J.~Chayes, C.~Borgs, A.~Chouldechova,
  S.~Geyik, K.~Kenthapadi, and A.~Kalai (2019a).
\newblock Bias in bios: A case study of semantic representation bias in a
  high-stakes setting.
\newblock In {\em Proceedings of the Conference on Fairness, Accountability,
  and Transparency}, pp.\  120--128. ACM.

\bibitem[\protect\citeauthoryear{De-Arteaga, Romanov, Wallach, Chayes, Borgs,
  Chouldechova, Geyik, Kenthapadi, and Kalai}{De-Arteaga
  et~al.}{2019b}]{de2019bias}
De-Arteaga, M., A.~Romanov, H.~Wallach, J.~Chayes, C.~Borgs, A.~Chouldechova,
  S.~Geyik, K.~Kenthapadi, and A.~T. Kalai (2019b).
\newblock Bias in bios: A case study of semantic representation bias in a
  high-stakes setting.
\newblock In {\em proceedings of the Conference on Fairness, Accountability,
  and Transparency}, pp.\  120--128.

\bibitem[\protect\citeauthoryear{De~la Pena and Gin{\'e}}{De~la Pena and
  Gin{\'e}}{2012}]{de2012decoupling}
De~la Pena, V. and E.~Gin{\'e} (2012).
\newblock {\em Decoupling: from dependence to independence}.
\newblock Springer Science \& Business Media.

\bibitem[\protect\citeauthoryear{Devlin, Chang, Lee, and Toutanova}{Devlin
  et~al.}{2018}]{devlin2018bert}
Devlin, J., M.-W. Chang, K.~Lee, and K.~Toutanova (2018).
\newblock Bert: Pre-training of deep bidirectional transformers for language
  understanding.
\newblock {\em arXiv preprint arXiv:1810.04805\/}.

\bibitem[\protect\citeauthoryear{Edwards and Storkey}{Edwards and
  Storkey}{2015}]{edwards2015censoring}
Edwards, H. and A.~Storkey (2015).
\newblock Censoring representations with an adversary.
\newblock {\em arXiv preprint arXiv:1511.05897\/}.

\bibitem[\protect\citeauthoryear{Fukumizu, Bach, and Jordan}{Fukumizu
  et~al.}{2009}]{fukumizu2009kernel}
Fukumizu, K., F.~R. Bach, and M.~I. Jordan (2009).
\newblock Kernel dimension reduction in regression.
\newblock {\em The Annals of Statistics\/}~{\em 37\/}(4), 1871--1905.

\bibitem[\protect\citeauthoryear{Glorot and Bengio}{Glorot and
  Bengio}{2010}]{glorot2010understanding}
Glorot, X. and Y.~Bengio (2010).
\newblock Understanding the difficulty of training deep feedforward neural
  networks.
\newblock In {\em Proceedings of the thirteenth international conference on
  artificial intelligence and statistics}, pp.\  249--256. JMLR Workshop and
  Conference Proceedings.

\bibitem[\protect\citeauthoryear{Gretton, Borgwardt, Rasch, Sch{\"o}lkopf, and
  Smola}{Gretton et~al.}{2012}]{gretton2012kernel}
Gretton, A., K.~M. Borgwardt, M.~J. Rasch, B.~Sch{\"o}lkopf, and A.~Smola
  (2012).
\newblock A kernel two-sample test.
\newblock {\em The Journal of Machine Learning Research\/}~{\em 13\/}(1),
  723--773.

\bibitem[\protect\citeauthoryear{Guo, Wang, Wang, and Zha}{Guo
  et~al.}{2022}]{guo2022learning}
Guo, D., C.~Wang, B.~Wang, and H.~Zha (2022).
\newblock Learning fair representations via distance correlation minimization.
\newblock {\em IEEE Transactions on Neural Networks and Learning Systems\/}.

\bibitem[\protect\citeauthoryear{Gupta, Ferber, Dilkina, and Ver~Steeg}{Gupta
  et~al.}{2021}]{gupta2021controllable}
Gupta, U., A.~M. Ferber, B.~Dilkina, and G.~Ver~Steeg (2021).
\newblock Controllable guarantees for fair outcomes via contrastive information
  estimation.
\newblock In {\em Proceedings of the AAAI Conference on Artificial
  Intelligence}, Volume~35, pp.\  7610--7619.

\bibitem[\protect\citeauthoryear{Han, Baldwin, and Cohn}{Han
  et~al.}{2021}]{han2021diverse}
Han, X., T.~Baldwin, and T.~Cohn (2021).
\newblock Diverse adversaries for mitigating bias in training.
\newblock In {\em Proceedings of the 16th Conference of the European Chapter of
  the Association for Computational Linguistics: Main Volume}, pp.\
  2760--2765.

\bibitem[\protect\citeauthoryear{Hardt, Price, and Srebro}{Hardt
  et~al.}{2016}]{hardt2016equality}
Hardt, M., E.~Price, and N.~Srebro (2016).
\newblock Equality of opportunity in supervised learning.
\newblock {\em Advances in neural information processing systems\/}~{\em 29}.

\bibitem[\protect\citeauthoryear{He, Fan, Wu, Xie, and Girshick}{He
  et~al.}{2020}]{he2020momentum}
He, K., H.~Fan, Y.~Wu, S.~Xie, and R.~Girshick (2020).
\newblock Momentum contrast for unsupervised visual representation learning.
\newblock In {\em Proceedings of the IEEE/CVF conference on computer vision and
  pattern recognition}, pp.\  9729--9738.

\bibitem[\protect\citeauthoryear{Huang, Jiao, Liao, Liu, and Yu}{Huang
  et~al.}{2024}]{huang2024deep}
Huang, J., Y.~Jiao, X.~Liao, J.~Liu, and Z.~Yu (2024).
\newblock Deep dimension reduction for supervised representation learning.
\newblock {\em IEEE Transactions on Information Theory\/}.

\bibitem[\protect\citeauthoryear{Huo and Sz{\'e}kely}{Huo and
  Sz{\'e}kely}{2016}]{huo2016fast}
Huo, X. and G.~J. Sz{\'e}kely (2016).
\newblock Fast computing for distance covariance.
\newblock {\em Technometrics\/}~{\em 58\/}(4), 435--447.

\bibitem[\protect\citeauthoryear{Jiao, Shen, Lin, and Huang}{Jiao
  et~al.}{2023}]{jiao2023deep}
Jiao, Y., G.~Shen, Y.~Lin, and J.~Huang (2023).
\newblock Deep nonparametric regression on approximate manifolds: Nonasymptotic
  error bounds with polynomial prefactors.
\newblock {\em The Annals of Statistics\/}~{\em 51\/}(2), 691--716.

\bibitem[\protect\citeauthoryear{Kingma and Ba}{Kingma and
  Ba}{2014}]{kingma2014adam}
Kingma, D.~P. and J.~Ba (2014).
\newblock Adam: A method for stochastic optimization.
\newblock {\em arXiv preprint arXiv:1412.6980\/}.

\bibitem[\protect\citeauthoryear{Kingma and Welling}{Kingma and
  Welling}{2013}]{kingma2013auto}
Kingma, D.~P. and M.~Welling (2013).
\newblock Auto-encoding variational bayes.
\newblock {\em arXiv preprint arXiv:1312.6114\/}.

\bibitem[\protect\citeauthoryear{Kumar, Raghunathan, Jones, Ma, and
  Liang}{Kumar et~al.}{2022}]{kumar2022fine}
Kumar, A., A.~Raghunathan, R.~Jones, T.~Ma, and P.~Liang (2022).
\newblock Fine-tuning can distort pretrained features and underperform
  out-of-distribution.
\newblock {\em arXiv preprint arXiv:2202.10054\/}.

\bibitem[\protect\citeauthoryear{Li and Wang}{Li and
  Wang}{2007}]{li2007directional}
Li, B. and S.~Wang (2007).
\newblock On directional regression for dimension reduction.
\newblock {\em Journal of the American Statistical Association\/}~{\em
  102\/}(479), 997--1008.

\bibitem[\protect\citeauthoryear{Li}{Li}{1991}]{li1991sliced}
Li, K.-C. (1991).
\newblock Sliced inverse regression for dimension reduction.
\newblock {\em Journal of the American Statistical Association\/}~{\em
  86\/}(414), 316--327.

\bibitem[\protect\citeauthoryear{Liu, Li, Yao, Xu, Ma, Xu, and Tong}{Liu
  et~al.}{2022}]{liu2022fair}
Liu, J., Z.~Li, Y.~Yao, F.~Xu, X.~Ma, M.~Xu, and H.~Tong (2022).
\newblock Fair representation learning: An alternative to mutual information.
\newblock In {\em Proceedings of the 28th ACM SIGKDD Conference on Knowledge
  Discovery and Data Mining}, pp.\  1088--1097.

\bibitem[\protect\citeauthoryear{Louizos, Swersky, Li, Welling, and
  Zemel}{Louizos et~al.}{2015}]{louizos2015variational}
Louizos, C., K.~Swersky, Y.~Li, M.~Welling, and R.~Zemel (2015).
\newblock The variational fair autoencoder.
\newblock {\em arXiv preprint arXiv:1511.00830\/}.

\bibitem[\protect\citeauthoryear{Ma and Zhu}{Ma and Zhu}{2012}]{ma2012semi}
Ma, Y. and L.~Zhu (2012).
\newblock Semiparametric approach to dimension reduction.
\newblock {\em Journal of the American Statistical Association\/}~{\em
  107\/}(497), 168--179.

\bibitem[\protect\citeauthoryear{Madras, Creager, Pitassi, and Zemel}{Madras
  et~al.}{2018}]{madras2018learning}
Madras, D., E.~Creager, T.~Pitassi, and R.~Zemel (2018).
\newblock Learning adversarially fair and transferable representations.
\newblock In {\em International Conference on Machine Learning}, pp.\
  3384--3393. PMLR.

\bibitem[\protect\citeauthoryear{Neyshabur, Sedghi, and Zhang}{Neyshabur
  et~al.}{2020}]{neyshabur2020being}
Neyshabur, B., H.~Sedghi, and C.~Zhang (2020).
\newblock What is being transferred in transfer learning?
\newblock {\em Advances in Neural Information Processing Systems\/}~{\em 33},
  512--523.

\bibitem[\protect\citeauthoryear{Park, Lee, Lee, Hwang, Kim, and Byun}{Park
  et~al.}{2022}]{park2022fair}
Park, S., J.~Lee, P.~Lee, S.~Hwang, D.~Kim, and H.~Byun (2022).
\newblock Fair contrastive learning for facial attribute classification.
\newblock In {\em Proceedings of the IEEE/CVF Conference on Computer Vision and
  Pattern Recognition}, pp.\  10389--10398.

\bibitem[\protect\citeauthoryear{Ravfogel, Elazar, Gonen, Twiton, and
  Goldberg}{Ravfogel et~al.}{2020}]{ravfogel2020null}
Ravfogel, S., Y.~Elazar, H.~Gonen, M.~Twiton, and Y.~Goldberg (2020).
\newblock Null it out: Guarding protected attributes by iterative nullspace
  projection.
\newblock In {\em Proceedings of the 58th Annual Meeting of the Association for
  Computational Linguistics}, pp.\  7237--7256.

\bibitem[\protect\citeauthoryear{Shen}{Shen}{2020}]{shen2020deep}
Shen, Z. (2020).
\newblock Deep network approximation characterized by number of neurons.
\newblock {\em Communications in Computational Physics\/}~{\em 28\/}(5),
  1768--1811.

\bibitem[\protect\citeauthoryear{Sheng and Yin}{Sheng and
  Yin}{2016}]{sheng2016sufficient}
Sheng, W. and X.~Yin (2016).
\newblock Sufficient dimension reduction via distance covariance.
\newblock {\em Journal of Computational and Graphical Statistics\/}~{\em
  25\/}(1), 91--104.

\bibitem[\protect\citeauthoryear{Shi, Ding, Kong, and Jiang}{Shi
  et~al.}{2024}]{shi2024debiasing}
Shi, E., L.~Ding, L.~Kong, and B.~Jiang (2024).
\newblock Debiasing with sufficient projection: A general theoretical framework
  for vector representations.
\newblock In {\em Proceedings of the 2024 Conference of the North American
  Chapter of the Association for Computational Linguistics: Human Language
  Technologies (Volume 1: Long Papers)}, pp.\  5960--5975.

\bibitem[\protect\citeauthoryear{Suzuki and Sugiyama}{Suzuki and
  Sugiyama}{2013}]{suzuki2013sufficient}
Suzuki, T. and M.~Sugiyama (2013).
\newblock Sufficient dimension reduction via squared-loss mutual information
  estimation.
\newblock {\em Neural computation\/}~{\em 25\/}(3), 725--758.

\bibitem[\protect\citeauthoryear{Sz{\'e}kely and Rizzo}{Sz{\'e}kely and
  Rizzo}{2013}]{szekely2013energy}
Sz{\'e}kely, G.~J. and M.~L. Rizzo (2013).
\newblock Energy statistics: A class of statistics based on distances.
\newblock {\em Journal of Statistical Planning and Inference\/}~{\em 143\/}(8),
  1249--1272.

\bibitem[\protect\citeauthoryear{Sz{\'e}kely, Rizzo, and Bakirov}{Sz{\'e}kely
  et~al.}{2007}]{szekely2007measuring}
Sz{\'e}kely, G.~J., M.~L. Rizzo, and N.~K. Bakirov (2007).
\newblock Measuring and testing dependence by correlation of distances.

\bibitem[\protect\citeauthoryear{Vepakomma, Tonde, and Elgammal}{Vepakomma
  et~al.}{2018}]{vepakomma2018supervised}
Vepakomma, P., C.~Tonde, and A.~Elgammal (2018).
\newblock Supervised dimensionality reduction via distance correlation
  maximization.
\newblock {\em Electronic Journal of Statistics\/}~{\em 12\/}(1), 960--984.

\bibitem[\protect\citeauthoryear{Zemel, Wu, Swersky, Pitassi, and Dwork}{Zemel
  et~al.}{2013}]{zemel2013learning}
Zemel, R., Y.~Wu, K.~Swersky, T.~Pitassi, and C.~Dwork (2013).
\newblock Learning fair representations.
\newblock In {\em International Conference on Machine Learning}, pp.\
  325--333. PMLR.

\bibitem[\protect\citeauthoryear{Zhao and Gordon}{Zhao and
  Gordon}{2022}]{zhao2022inherent}
Zhao, H. and G.~J. Gordon (2022).
\newblock Inherent tradeoffs in learning fair representations.
\newblock {\em Journal of Machine Learning Research\/}~{\em 23\/}(57), 1--26.

\bibitem[\protect\citeauthoryear{Zhen, Meng, Chakraborty, and Singh}{Zhen
  et~al.}{2022}]{zhen2022versatile}
Zhen, X., Z.~Meng, R.~Chakraborty, and V.~Singh (2022).
\newblock On the versatile uses of partial distance correlation in deep
  learning.
\newblock In {\em European Conference on Computer Vision}, pp.\  327--346.
  Springer.

\bibitem[\protect\citeauthoryear{Zhu, Liao, Li, Jiao, Liu, and Lu}{Zhu
  et~al.}{2023}]{zhu2023invariant}
Zhu, J., X.~Liao, C.~Li, Y.~Jiao, J.~Liu, and X.~Lu (2023).
\newblock Invariant and sufficient supervised representation learning.
\newblock In {\em 2023 International Joint Conference on Neural Networks
  (IJCNN)}, pp.\  1--8. IEEE.

\bibitem[\protect\citeauthoryear{Zhu, Zhu, and Feng}{Zhu
  et~al.}{2010}]{zhu2010dimension}
Zhu, L.-P., L.-X. Zhu, and Z.-H. Feng (2010).
\newblock Dimension reduction in regressions through cumulative slicing
  estimation.
\newblock {\em Journal of the American Statistical Association\/}~{\em
  105\/}(492), 1455--1466.

\end{thebibliography}
\end{singlespace}
\end{document}